\DeclareMathOperator*{\E}{\mathbb{E}}
\DeclareMathOperator*{\argmax}{argmax}
\DeclareMathOperator{\Tr}{trace}
\newcommand{\wb}{\widebar}
\newcommand{\cL}{\mathcal{L}}
\newcommand{\cS}{\mathcal{S}}
\newcommand{\cA}{\mathcal{A}}
\newcommand{\1}{\boldsymbol{1}}
\newcommand{\R}{\mathbb{R}}
\newcommand{\pik}{\pi_{k}}
\newcommand{\pikup}{\pi_{k+1}}
\newcommand{\pol}{\pi_{\theta}}
\newcommand{\polk}{\pi_{\theta_k}}
\newcommand{\polkup}{\pi_{\theta_{k+1}}}
\newcommand{\grad}{\nabla_{\theta}}
\newcommand{\vfunc}{\wb{V}^{\pi}}
\newcommand{\qfunc}{\wb{Q}^{\pi}}
\newcommand{\vphi}{\wb{V}^{\pi}_{\phi}}
\newcommand{\adv}{\wb{A}^{\pi}}
\newcommand{\vfuncd}{V_{\gamma}^{\pi}}
\newcommand{\qfuncd}{Q_{\gamma}^{\pi}}
\newcommand{\advd}{A_{\gamma}^{\pi}}
\newcommand{\vtarg}{\wb{V}^{\text{target}}}
\newcommand{\advk}{\wb{A}^{\pik}}
\newcommand{\dpi}{d_{\pi}}
\newcommand{\dpip}{d_{\pi'}}
\newcommand{\dpid}{d_{\pi,\gamma}}
\newcommand{\dpik}{d_{\pik}}
\newcommand{\dpolk}{d_{\pi_{\theta_k}}}
\newcommand{\dpolkd}{d_{\pi_{\theta_k,\gamma}}}
\newcommand{\Mpi}{M^{\pi}}
\newcommand{\Zpi}{Z^{\pi}}
\newcommand{\Mpip}{M^{\pi'}}
\newcommand{\Zpip}{Z^{\pi'}}
\newcommand{\dg}{\text{dg}}
\newcommand{\norm}[1]{\left\| #1 \right\|}
\newcommand{\KL}[2]{D_\text{KL}\left(#1\middle\| #2\right)}
\newcommand{\avKL}[2]{\bar{D}_\text{KL}(#1\parallel #2)}
\newcommand{\TV}[2]{D_\text{TV}(#1\parallel #2)}
\newtheorem{theorem}{Theorem}
\newtheorem{lemma}{Lemma}
\newtheorem{proposition}{Proposition}
\newtheorem{corollary}{Corollary}
\newtheorem{assumption}{Assumption}
\icmltitlerunning{On-Policy Deep Reinforcement Learning for the Average-Reward Criterion}
\begin{document}

\twocolumn[
\icmltitle{On-Policy Deep Reinforcement Learning for the Average-Reward Criterion}

% It is OKAY to include author information, even for blind
% submissions: the style file will automatically remove it for you
% unless you've provided the [accepted] option to the icml2021
% package.

% List of affiliations: The first argument should be a (short)
% identifier you will use later to specify author affiliations
% Academic affiliations should list Department, University, City, Region, Country
% Industry affiliations should list Company, City, Region, Country

% You can specify symbols, otherwise they are numbered in order.
% Ideally, you should not use this facility. Affiliations will be numbered
% in order of appearance and this is the preferred way.
% \icmlsetsymbol{equal}{*}

\begin{icmlauthorlist}
\icmlauthor{Yiming Zhang}{to}
\icmlauthor{Keith W. Ross}{goo,to}
% \icmlauthor{Cieua Vvvvv}{goo}
% \icmlauthor{Iaesut Saoeu}{ed}
% \icmlauthor{Fiuea Rrrr}{to}
% \icmlauthor{Tateu H.~Yasehe}{ed,to,goo}
% \icmlauthor{Aaoeu Iasoh}{goo}
% \icmlauthor{Buiui Eueu}{ed}
% \icmlauthor{Aeuia Zzzz}{ed}
% \icmlauthor{Bieea C.~Yyyy}{to,goo}
% \icmlauthor{Teoau Xxxx}{ed}
% \icmlauthor{Eee Pppp}{ed}
\end{icmlauthorlist}

\icmlaffiliation{to}{New York University}
\icmlaffiliation{goo}{New York University Shanghai}

\icmlcorrespondingauthor{Yiming Zhang}{yiming.zhang@cs.nyu.edu}
% \icmlcorrespondingauthor{Eee Pppp}{ep@eden.co.uk}

% You may provide any keywords that you
% find helpful for describing your paper; these are used to populate
% the "keywords" metadata in the PDF but will not be shown in the document
\icmlkeywords{Machine Learning, ICML}

\vskip 0.3in
]

% this must go after the closing bracket ] following \twocolumn[ ...

% This command actually creates the footnote in the first column
% listing the affiliations and the copyright notice.
% The command takes one argument, which is text to display at the start of the footnote.
% The \icmlEqualContribution command is standard text for equal contribution.
% Remove it (just {}) if you do not need this facility.

\printAffiliationsAndNotice{}  % leave blank if no need to mention equal contribution
% \printAffiliationsAndNotice{\icmlEqualContribution} % otherwise use the standard text.

\begin{abstract}
We develop theory and algorithms for average-reward on-policy Reinforcement Learning (RL). We first consider bounding the difference of the long-term average reward for two policies. We show that previous work based on the discounted return \citep{schulman2015trust,achiam2017constrained} results in a non-meaningful bound in the average-reward setting. By addressing the average-reward criterion directly, we then derive a novel bound which depends on the average divergence between the two policies and Kemeny's constant. Based on this bound, we develop an iterative procedure which produces a sequence of monotonically improved policies for the average reward criterion. This iterative procedure can then be combined with classic DRL (Deep Reinforcement Learning) methods, resulting in practical DRL algorithms that target the long-run average reward criterion. In particular, we demonstrate that Average-Reward TRPO (ATRPO), which adapts the on-policy TRPO algorithm to the average-reward criterion, significantly outperforms TRPO in the most challenging MuJuCo environments.
\end{abstract}

\section{Introduction}
The goal of Reinforcement Learning (RL) is to build agents that can learn high-performing behaviors through trial-and-error interactions with the environment. Broadly speaking, modern RL tackles two kinds of problems: \textit{episodic tasks} and \textit{continuing tasks}. In episodic tasks, the agent-environment interaction can be broken into separate distinct episodes, and the performance of the agent is simply the sum of the rewards accrued within an episode. Examples of episodic tasks include training an agent to learn to play Go \citep{silver2016mastering,silver2018general}, where the episode terminates when the game ends.
In continuing tasks, such as robotic locomotion \citep{peters2008reinforcement,schulman2015trust,haarnoja2018soft} or in a queuing scenario \cite{tadepalli1994h,sutton2018reinforcement}, there is no natural separation of episodes and the agent-environment interaction  continues indefinitely. The performance of an agent in a continuing task is more difficult to quantify since the total sum of rewards is typically infinite.

One way of making the long-term reward objective meaningful for continuing tasks is to apply \textit{discounting} so that the infinite-horizon return is guaranteed to be finite for any bounded reward function. However the discounted objective biases the optimal policy to choose actions that lead to high near-term performance rather than to high long-term performance.  Such an objective is not appropriate when the goal is to optimize long-term behavior, i.e., when the natural objective underlying the task at hand is non-discounted. In particular, we note that for the vast majority of benchmarks for reinforcement learning such as Atari games \citep{mnih2013playing} and MuJoCo \citep{todorov2012mujoco}, a non-discounted performance measure is used to evaluate the trained policies.

Although in many circumstances, non-discounted criteria are more natural, most 
of the successful DRL algorithms today have been designed to optimize a discounted criterion during training. One possible work-around for this mismatch is to simply train with a discount factor that is very close to one. Indeed, from the Blackwell optimality theory of MDPs \cite{blackwell1962discrete}, we know that if the discount factor is very close to one, then an optimal policy for the infinite-horizon discounted criterion is also optimal for the long-run average-reward criterion. However, although Blackwell's result suggests we can simply use a large discount factor to optimize non-discounted criteria, problems with large discount factors are in general more difficult to solve \citep{petrik2008biasing,jiang2015dependence,jiang2016structural,lehnert2018value}. Researchers have also observed that state-of-the-art DRL algorithms typically break down when the discount factor gets too close to one \citep{schulman2016high,andrychowicz2020matters}.  

In this paper we seek to develop algorithms for 
finding high-performing policies for average-reward DRL problems. Instead of trying to simply use standard discounted DRL algorithms with large discount factors, we instead attack the problem head-on, seeking to directly optimize the average-reward criterion. 
While the average reward setting has been extensively studied in the classical Markov Decision Process literature \citep{howard1960dynamic,blackwell1962discrete,veinott1966finding,bertsekas1995dynamic}, and has to some extent been studied for tabular RL \citep{schwartz1993reinforcement,mahadevan1996average,abounadi2001learning,wan2020learning}, it has received relatively little attention in the DRL community. In this paper, our focus is on developing average-reward on-policy DRL algorithms.

One major source of difficulty with modern on-policy DRL algorithms lies in controlling the step-size for policy updates. In order to have better control over step-sizes, \citet{schulman2015trust} constructed a lower bound on the difference between the expected discounted return
for two arbitrary policies $\pi$ and $\pi'$ by building upon the work of \citet{kakade2002approximately}. The bound is a function of the divergence between these two policies and the discount factor. \citet{schulman2015trust} showed that iteratively maximizing this lower bound generates a sequence of monotonically improved policies for their discounted return. 

In this paper, we first show that the policy improvement theorem from \citet{schulman2015trust} results in a non-meaningful bound in the average reward case. We then derive a novel result which lower bounds the difference of the average long-run rewards.
The bound depends on the average divergence between the policies and on the so-called Kemeny constant, which measures to what degree the irreducible Markov chains associated with the policies are ``well-mixed''. We show that iteratively maximizing this lower bound guarantees monotonic average reward policy improvement. 

Similar to the discounted case, the problem of maximizing the lower bound can be approximated with DRL algorithms which can be optimized using samples collected in the environment. In particular, we describe in detail the Average Reward TRPO (ATRPO) algorithm, which is the average reward variant of the TRPO algorithm \citep{schulman2015trust}. Using the MuJoCo simulated robotic benchmark, we carry out extensive experiments demonstrating the effectiveness of of ATRPO compared to its discounted counterpart, in particular on the most challenging MuJoCo tasks.
Notably, we show that ATRPO can significantly out-perform TRPO on a set of high-dimensional continuing control tasks.

Our main contributions can be summarized as follows:
\begin{itemize}
    \item We extend the policy improvement bound from \citet{schulman2015trust} and \citet{achiam2017constrained} to the average reward setting. We demonstrate that our new bound depends on the average divergence between the two policies and on the mixing time of the underlying Markov chain.
    \item We use the aforementioned policy improvement bound to derive novel on-policy deep reinforcement learning algorithms for optimizing the average reward.
    \item Most modern DRL algorithms introduce a discount factor during training even when the natural objective of interest is undiscounted. This leads to a discrepancy between the evaluation and training objective. We demonstrate that optimizing the average reward directly can effectively address this mismatch and lead to much stronger performance.
\end{itemize}

\section{Preliminaries}\label{sec:prelim}
Consider a Markov Decision Process (MDP) \citep{sutton2018reinforcement} $(\cS, \cA, P, r, \mu)$ where the state space $\cS$ and action space $\cA$ are assumed to be finite. The transition probability is denoted by $P:\cS\times\cA\times \cS\to[0,1]$, the bounded reward function  $r:\cS\times\cA\to [r_{\min}, r_{\max}]$, and $\mu:\cS\to [0,1]$ is the initial state distribution.  Let $\pi:\cS\to\Delta(\cA)$ be a stationary policy where $\Delta(\cA)$ is the probabilty simplex over $\cA$, and $\Pi$ is the set of all stationary policies. 
We consider two classes of MDPs:
\begin{assumption}[Ergodic]\label{assump:irreducible}
For every stationary policy, the induced Markov chain is irreducible and aperiodic.
\end{assumption}
\begin{assumption}[Aperiodic Unichain]\label{assump:unichain}
For every stationary policy, the induced Markov chain contains a single aperiodic recurrent class and a finite but possibly empty set of transient states.
\end{assumption}
By definition, any MDP which satisfies Assumption \ref{assump:irreducible} is also unichain.
We note that most MDPs of practical interest belong in these two classes. We will mostly focus on MDPs which satisfy Assumption \ref{assump:irreducible} in the main text. In the supplementary material, we will address the aperiodic unichain case. Here we present the two objective formulations for continuing control tasks: the average reward approach and discounted reward criterion.

\textbf{Average Reward Criterion} \\
The average reward objective is defined as:
\begin{equation}\label{eq:AvgR_obj}
    \rho(\pi) := \lim_{N\to\infty}\frac{1}{N}\E_{\tau\sim\pi}\left[\sum_{t=0}^{N-1} r(s_t,a_t)\right] = \E_{\substack{s\sim\dpi\\ a\sim\pi}}[r(s,a)].
\end{equation}
Here $\dpi(s):=\lim_{N\to\infty}\frac{1}{N} \sum_{t=0}^{N-1} P_{\tau\sim\pi}(s_t=s)$ is the \textit{stationary state distribution under policy $\pi$}, and $\tau=(s_0,a_0,\dots,)$ is a sample trajectory. The limits in $\rho(\pi)$ and $\dpi(s)$ are guaranteed to exist under our assumptions. Since the MDP is aperiodic, it can also be shown that $\dpi(s)=\lim_{t\to\infty}P_{\tau\sim\pi}(s_t=s)$. In the unichain case, the average reward $\rho(\pi)$ does not depend on the initial state for any policy $\pi$ \citep{bertsekas1995dynamic}.
We express the \textit{average-reward bias function} as
\begin{equation*}
\vfunc(s):= \E_{\tau\sim\pi}\left[\sum_{t=0}^{\infty} (r(s_t,a_t) - \rho(\pi))\bigg| s_0=s\right]
\end{equation*}
and \textit{average-reward action-bias function} as
\begin{equation*}
\qfunc(s,a):= \E_{\tau\sim\pi}\left[\sum_{t=0}^{\infty} (r(s_t,a_t) - \rho(\pi))\bigg| s_0=s, a_0=a\right].
\end{equation*}
We define the \textit{average-reward advantage function} as
\begin{equation*}
\adv(s,a) := \qfunc(s,a)-\vfunc(s).
\end{equation*}

\textbf{Discounted Reward Criterion} \\
For some discount factor $\gamma\in (0,1)$, the discounted reward objective is defined as
\begin{equation}
    \rho_{\gamma}(\pi) :=  \E_{\tau\sim\pi}\left[\sum_{t=0}^{\infty}\gamma^t r(s_t,a_t)\right] = \frac{1}{1-\gamma}\E_{\substack{s\sim\dpid\\ a\sim\pi}}[r(s,a)]
\end{equation}
where $\dpid(s) := (1-\gamma)\sum_{t=0}^{\infty}\gamma^t P_{\tau\sim\pi}(s_t=s)$ is known as the \textit{future discounted state visitation distribution under policy $\pi$}. Note that unlike the average reward objective, the discounted objective depends on the initial state distribution $\mu$. It can be easily shown that $\dpid(s)\to\dpi(s)$ for all $s$ as $\gamma\to 1$. The \textit{discounted value function} is defined as $\vfuncd(s):= \E_{\tau\sim\pi}\left[\sum_{t=0}^{\infty} \gamma^t r(s_t,a_t)\bigg| s_0=s\right]$ and \textit{discounted action-value function} $\qfuncd(s,a):= \E_{\tau\sim\pi}\left[\sum_{t=0}^{\infty} \gamma^t r(s_t,a_t)\bigg| s_0=s, a_0=a\right]$.
Finally, the \textit{discounted advantage function} is defined as $\advd(s,a) := \qfuncd(s,a)-\vfuncd(s)$.

It is well-known that $\lim_{\gamma\to 1}(1-\gamma)\rho_{\gamma}(\pi)=\rho(\pi)$, implying that the discounted and average reward objectives are equivalent in the limit as $\gamma$ approaches 1 \citep{blackwell1962discrete}. We further discuss the relationship between the discounted and average reward criteria in Appendix \ref{append:disc_avg} and prove that $\lim_{\gamma\to 1}\advd(s,a) = \adv(s,a)$ (see Corollary \ref{corollary:discount_relations}).
The proofs of all results in the subsequent sections, if not given, can be found in the supplementary material.

\section{Montonically Improvement Guarantees for Discounted RL}
In much of the on-policy DRL literature \citep{schulman2015trust,schulman2017proximal, wu2017scalable, vuong2019supervised, song2020v}, algorithms iteratively update policies by maximizing them within a local region, i.e., at iteration $k$ we find a policy $\pi_{k+1}$ by maximizing $\rho_{\gamma}(\pi)$ within some region $D(\pi,\pik)\leq\delta$ for some divergence measure $D$. By using different choices of $D$ and $\delta$, this approach allows us to control the step-size of each update,  which can lead to better sample efficiency \citep{peters2008reinforcement}.
\citet{schulman2015trust} derived a policy improvement bound based on a specific choice of $D$:
\begin{equation}\label{eq:trpo_imp}
\begin{aligned}
\rho_{\gamma}(\pi_{k+1}) - \rho_{\gamma}(\pi_k) &\geq  \frac{1}{1-\gamma} \E_{\substack{s\sim d_{\pik,\gamma} \\ a\sim\pi_{k+1}}}[A_{\gamma}^{\pik}(s,a)] \\
&- C\cdot\max_s[\TV{\pikup}{\pik}[s]] 
\end{aligned}
\end{equation}
where $\TV{\pi'}{\pi}[s]:=\frac{1}{2}\sum_a|\pi'(a|s)-\pi(a|s)|$ is the \textit{total variation divergence}, and $C=4\gamma\epsilon/(1-\gamma)^2$ where $\epsilon$ is some constant. \citet{schulman2015trust} showed that by choosing $\pikup$ which maximizes the right hand side of \eqref{eq:trpo_imp}, we are guaranteed to have $\rho_{\gamma}(\pikup)\geq\rho_{\gamma}(\pik)$. This provided the theoretical foundation for an entire class of on-policy DRL algorithms
\citep{schulman2015trust,schulman2017proximal, wu2017scalable,vuong2019supervised,song2020v}.

A natural question arises here is whether the iterative procedure described by \citet{schulman2015trust} also guarantees improvement for the average reward. Since the discounted and average reward objectives become equivalent as $\gamma\to 1$, one may conjecture that we can also lower bound the policy performance difference of the average reward objective by simply letting $\gamma\to 1$ for the bounds in \citet{schulman2015trust}.  Unfortunately this results in a non-meaningful bound (see supplementary material for proof.) 

\begin{restatable}{proposition}{uninformbound}\label{prop:uninform_bound}
Consider the bounds in Theorem 1 of \citet{schulman2015trust} and Corollary 1 of \citet{achiam2017constrained}. The right hand side of both bounds times $1-\gamma$ goes to negative infinity as $\gamma\to 1$.
\end{restatable}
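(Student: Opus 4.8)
The plan is to write out the right-hand side of each bound explicitly, multiply it by $1-\gamma$, and then analyze the limit $\gamma\to1$ term by term. The guiding observation is that in both bounds the surrogate (advantage) term carries a prefactor of order $1/(1-\gamma)$, whereas the penalty term carries a prefactor of order $1/(1-\gamma)^2$. Hence, after multiplying by $1-\gamma$, the surrogate term should settle to a finite constant while the penalty term should still blow up, dragging the whole expression to $-\infty$.

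For the TRPO bound \eqref{eq:trpo_imp}, substituting $C=4\gamma\epsilon/(1-\gamma)^2$ and multiplying the right-hand side by $1-\gamma$ gives
\[
(1-\gamma)\cdot\text{RHS} = \E_{\substack{s\sim d_{\pik,\gamma}\\ a\sim\pikup}}\left[A_\gamma^{\pik}(s,a)\right] - \frac{4\gamma\epsilon}{1-\gamma}\,\max_s\TV{\pikup}{\pik}[s].
\]
First I would show the surrogate term converges to a finite limit. Since $\cS$ and $\cA$ are finite, this expectation is the finite sum $\sum_{s,a} d_{\pik,\gamma}(s)\,\pikup(a\mid s)\,A_\gamma^{\pik}(s,a)$; using $d_{\pik,\gamma}(s)\to\dpik(s)$ (noted in Section~\ref{sec:prelim}) and $A_\gamma^{\pik}(s,a)\to\advk(s,a)$ (Corollary~\ref{corollary:discount_relations}), each summand converges, so the sum converges to $\E_{s\sim\dpik,\,a\sim\pikup}[\advk(s,a)]$, which is finite because the reward is bounded. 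Next I would treat the penalty term: for two fixed distinct policies $\pik\neq\pikup$, the quantity $\max_s\TV{\pikup}{\pik}[s]$ is a strictly positive constant independent of $\gamma$, and $\epsilon$ (the maximal advantage magnitude appearing in the bound) converges to a finite \emph{positive} constant as $\gamma\to1$ by Corollary~\ref{corollary:discount_relations}. Therefore the coefficient $4\gamma\epsilon/(1-\gamma)\to+\infty$, the penalty term tends to $-\infty$, and a finite quantity minus $+\infty$ is $-\infty$.

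For Corollary~1 of \citet{achiam2017constrained} the argument is identical in structure: that bound has the same surrogate term scaling like $1/(1-\gamma)$ and a penalty term (in terms of an average/maximal total-variation divergence) scaling like $1/(1-\gamma)^2$. Multiplying by $1-\gamma$ again leaves a penalty coefficient of order $1/(1-\gamma)$ multiplying a $\gamma$-independent positive divergence, so the same finite-minus-infinity conclusion holds, and I would simply invoke the TRPO computation with the coefficients renamed.

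The only genuinely delicate step is the convergence of the surrogate term, which requires passing the limit through the expectation. Here the finiteness of $\cS$ and $\cA$ together with the boundedness of the reward (hence uniform boundedness of the advantage functions in $\gamma$) makes the interchange routine, so no dominated-convergence subtlety actually arises. The one case to flag explicitly is $\pikup=\pik$: there both the surrogate and the penalty vanish identically and $(1-\gamma)\cdot\text{RHS}\to0$, so the statement is understood for two distinct policies, which is precisely the setting of a policy-improvement step.
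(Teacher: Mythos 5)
Your proof is correct and follows essentially the same route as the paper's: multiply the bound by $1-\gamma$, use $d_{\pi,\gamma}\to d_{\pi}$ and Corollary~\ref{corollary:discount_relations} to show the surrogate advantage term converges to a finite limit, and observe that the penalty coefficient of order $1/(1-\gamma)$ times a fixed positive divergence drags the expression to $-\infty$. The only (inessential) differences are that the paper works out the details for Corollary~1 of \citet{achiam2017constrained} and sketches the TRPO case while you do the reverse, and that you explicitly flag the degenerate case $\pi_{k+1}=\pi_k$ (where the limit is $0$, not $-\infty$), an implicit genericity assumption the paper leaves unstated.
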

Since $\lim_{\gamma\to 1}(1-\gamma)(\rho_{\gamma}(\pi') - \rho_{\gamma}(\pi)) = \rho(\pi') - \rho(\pi)$, Proposition \ref{prop:uninform_bound} says that the policy improvement guarantee from \citet{schulman2015trust} and \citet{achiam2017constrained} becomes trivial when $\gamma\to 1$ and thus does not generalize to the average reward setting. In the next section, we will derive a novel policy improvement bound for the average reward objective, which in turn can be used to generate monotonically improved policies w.r.t. the average reward.

\section{Main Results}

\subsection{Average Reward Policy Improvement Theorem}
 
Let $\dpi\in\R^{|\cS|}$ be the probability column vector whose components are $\dpi(s)$.
Let $P_{\pi}\in\R^{|\cS|\times|\cS|}$ be the transition matrix under policy $\pi$ whose $(s,s')$ component is $P_{\pi}(s'|s)=\sum_a P(s'|s,a)\pi(a|s)$, and $P_{\pi}^{\star}:=\lim_{N\to\infty}\frac{1}{N}\sum_{t=0}^N P_{\pi}^t$ be the limiting distribution of the transition matrix. For aperiodic unichain MDPs, $P_{\pi}^{\star} =\lim _{t\to\infty}P_{\pi}^t=\1\dpi^T$.

Suppose we have a new policy $\pi'$ obtained via some update rule from the current policy $\pi$. Similar to the discounted case, we would like to measure their performance difference $\rho(\pi')-\rho(\pi)$ using an expression which depends on $\pi$ and some divergence metric between the two policies.
The following identity shows that $\rho(\pi')-\rho(\pi)$ can be expressed using the average reward advantange function of $\pi$.
\begin{restatable}{lemma}{policydiff}\label{lemma:policy_diff}
Under Assumption \ref{assump:unichain}:
\begin{equation}\label{eq:policy_diff}
    \rho(\pi') - \rho(\pi) =  \E_{\substack{s\sim \dpip \\ a\sim\pi'}}\left[\adv(s,a)\right]
\end{equation}
for any two stochastic policies $\pi$ and $\pi'$.
\end{restatable}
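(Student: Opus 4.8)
The plan is to mirror the proof of the classical discounted performance difference lemma \citep{kakade2002approximately}, substituting the average-reward bias function $\vfunc$ for the discounted value function and using the stationarity of $\dpip$ in place of the geometric discounting. First I would derive a Bellman-type identity for the bias function directly from its definition: peeling the first term off the infinite sum defining $\qfunc$ gives
\[
\qfunc(s,a) = r(s,a) - \rho(\pi) + \E_{s'\sim P(\cdot\mid s,a)}\!\left[\vfunc(s')\right].
\]
Plugging this into $\adv(s,a)=\qfunc(s,a)-\vfunc(s)$ yields the pointwise expansion
\[
\adv(s,a) = r(s,a) - \rho(\pi) + \E_{s'\sim P(\cdot\mid s,a)}\!\left[\vfunc(s')\right] - \vfunc(s).
\]

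Next I would take the expectation of both sides over $s\sim\dpip$ and $a\sim\pi'$. By the definition of the average-reward objective in \eqref{eq:AvgR_obj}, the reward term integrates to exactly $\rho(\pi')$, while the constant term contributes $-\rho(\pi)$. This leaves only the residual bias contribution
\[
\E_{\substack{s\sim\dpip\\ a\sim\pi'}}\!\left[\E_{s'\sim P(\cdot\mid s,a)}\!\left[\vfunc(s')\right] - \vfunc(s)\right],
\]
and the entire argument reduces to showing that this residual vanishes.

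This cancellation is the step I expect to require the most care, and it is precisely where the unichain assumption enters. The key fact is that $\dpip$ is the unique stationary distribution of $P_{\pi'}$, i.e.\ $\dpip^T P_{\pi'} = \dpip^T$: advancing a state sampled from $\dpip$ by one step under $\pi'$ produces a state again distributed according to $\dpip$. Formally, $\E_{s\sim\dpip,a\sim\pi'}\E_{s'}[\vfunc(s')] = \E_{s'\sim\dpip}[\vfunc(s')] = \E_{s\sim\dpip}[\vfunc(s)]$, which exactly cancels the $-\E_{s\sim\dpip}[\vfunc(s)]$ term and makes the residual zero. Assumption \ref{assump:unichain} guarantees that $\dpip$ exists and is unique with this stationarity property; in the unichain (as opposed to ergodic) case I would additionally observe that transient states receive zero stationary mass, so they do not affect the cancellation. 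Combining the three contributions gives $\E_{s\sim\dpip,a\sim\pi'}[\adv(s,a)] = \rho(\pi')-\rho(\pi)$, as claimed.
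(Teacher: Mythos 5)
Your proposal is correct and matches the paper's own (first) proof essentially step for step: both expand $\adv(s,a)$ via the average-reward Bellman equation $\qfunc(s,a)=r(s,a)-\rho(\pi)+\E_{s'\sim P(\cdot|s,a)}[\vfunc(s')]$, take expectations under $s\sim\dpip$, $a\sim\pi'$, and cancel the residual bias terms using the stationarity $\dpip^T P_{\pi'}=\dpip^T$. Your added remark that transient states carry zero stationary mass in the unichain case is a fine (if implicit in the paper) clarification, but it does not change the argument.
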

Lemma \ref{lemma:policy_diff} is an extension of the well-known policy difference lemma from \citet{kakade2002approximately} to the average reward case. A similar result was proven by \citet{even2009online} and \citet{neu2010online}. For completeness, we provide a simple proof in the supplementary material. Note that this expression depends on samples drawn from $\pi'$. However we can show through the following lemma that when $\dpi$ and $\dpip$ are ``close'' w.r.t. the TV divergence, we can evaluate $\rho(\pi')$ using samples from $\dpi$ (see supplementary material for proof).

\begin{restatable}{lemma}{policyimpd}\label{lemma:policy_impd}
Under Assumption \ref{assump:unichain}, the following bound holds for any two stochastic policies $\pi$ and $\pi'$:
\begin{equation}
   \left| \rho(\pi') - \rho(\pi) -\E_{\substack{s\sim \dpi\\ a\sim\pi'}}\left[\adv(s,a)\right] \right| \leq 2\epsilon\TV{\dpip}{\dpi}
\end{equation}
where $\epsilon=\max_s\left|\E_{a\sim\pi'(a|s)}[\adv(s,a)]\right|$.
\end{restatable}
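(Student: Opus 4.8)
The plan is to start from the policy difference identity in Lemma~\ref{lemma:policy_diff} and isolate the effect of replacing the on-policy state distribution $\dpip$ by $\dpi$. Writing $\rho(\pi')-\rho(\pi)=\E_{s\sim\dpip, a\sim\pi'}[\adv(s,a)]$ and noting that the target quantity $\E_{s\sim\dpi, a\sim\pi'}[\adv(s,a)]$ differs \emph{only} in the state distribution, I would define the per-state function $g(s):=\E_{a\sim\pi'(\cdot|s)}[\adv(s,a)]$. Since the inner action expectation is identical in both terms, the quantity inside the absolute value collapses to a single sum against the signed measure $\dpip-\dpi$:
\begin{equation*}
\rho(\pi')-\rho(\pi)-\E_{\substack{s\sim\dpi\\ a\sim\pi'}}[\adv(s,a)] = \sum_{s}\big(\dpip(s)-\dpi(s)\big)\,g(s).
\end{equation*}

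The second step is to bound this inner product. By the triangle inequality (equivalently H\"older's inequality with the $\ell_\infty$/$\ell_1$ pairing),
\begin{equation*}
\Big|\sum_{s}\big(\dpip(s)-\dpi(s)\big)g(s)\Big| \le \Big(\max_s|g(s)|\Big)\sum_s\big|\dpip(s)-\dpi(s)\big|.
\end{equation*}
I would then identify the two factors with the quantities in the statement: by definition $\max_s|g(s)|=\epsilon$, and because the total variation divergence is exactly half the $\ell_1$ distance between the two state distributions, $\sum_s|\dpip(s)-\dpi(s)|=2\,\TV{\dpip}{\dpi}$. Combining these yields the claimed bound $2\epsilon\,\TV{\dpip}{\dpi}$.

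There is no deep obstacle here; the argument is essentially a one-line application of H\"older's inequality once the action expectations are factored out, and the finiteness of $\epsilon$ is guaranteed since $\cS$ is finite and the bias (hence advantage) function is well-defined and bounded under the aperiodic unichain assumption. The only point requiring care is the constant: the factor of $2$ in the final bound comes precisely from the convention that $\TV{\cdot}{\cdot}$ denotes \emph{half} the $\ell_1$ distance, so one must not drop it when converting between the $\ell_1$ norm and the TV divergence.
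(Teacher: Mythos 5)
Your proposal is correct and follows essentially the same route as the paper's proof: both invoke Lemma~\ref{lemma:policy_diff} to replace $\rho(\pi')-\rho(\pi)$ with the advantage expectation under $\dpip$, collapse the difference to a sum of $\E_{a\sim\pi'}[\adv(s,a)]$ against the signed measure $\dpip-\dpi$, and finish with H\"older's inequality together with the identity $\norm{\dpip-\dpi}_1 = 2\TV{\dpip}{\dpi}$. Your explicit remarks on the finiteness of $\epsilon$ and on tracking the factor of $2$ from the TV convention are sound but do not change the argument.
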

Lemma \ref{lemma:policy_impd} implies that
\begin{equation}
 \rho(\pi')\approx \rho(\pi) +\E_{\substack{s\sim \dpi\\ a\sim\pi'}}\left[\adv(s,a)\right]   
\end{equation}
when $\dpi$ and $\dpip$ are ``close''. However in order to study how policy improvement is connected to changes in the actual policies themselves, we need to analyze the relationship between changes in the policies and changes in stationary distributions. It turns out that the sensitivity of the stationary distributions in relation to the policies is related to the structure of the underlying Markov chain.

Let $\Mpi\in\R^{|\cS|\times|\cS|}$ be the \textit{mean first passage time matrix} whose elements $\Mpi(s,s')$ is the expected number of steps it takes to reach state $s'$ from $s$ under policy $\pi$. Under Assumption \ref{assump:irreducible}, the matrix $\Mpi$ can be calculated via (see Theorem 4.4.7 of \citet{kemeny1960finite})
\begin{equation}
    \Mpi = (I-\Zpi + E\Zpi_{\dg})D^{\pi}
\end{equation}
where $\Zpi = (I-P_{\pi}+P^{\star}_{\pi})^{-1}$ is known as the \textit{fundamental matrix of the Markov chain} \citep{kemeny1960finite}, $E$ is a square matrix consisting of all ones. The subscript `dg' on some square matrix refers to taking the diagonal of said matrix and placing zeros everywhere else. $D^{\pi}\in\R^{|\cS|\times|\cS|}$ is a diagonal matrix whose elements are $1/\dpi(s)$. 

One important property of mean first passage time is that for any MDP which satisfies Assumption \ref{assump:irreducible}, the quantity
\begin{equation}
    \kappa^{\pi} = \sum_{s'} \dpi(s') \Mpi(s,s') = \Tr(\Zpi)
\end{equation}
is a constant independent of the starting state for any policy $\pi$ (Theorem 4.4.10 of \citet{kemeny1960finite}.) The constant $\kappa^{\pi}$ is sometimes referred to as \textit{Kemeny's constant} \citep{grinstead2012introduction}. This constant can be interpreted as the mean number of steps it takes to get to any goal state 
weighted by the steady-distribution of the goal states. This weighted mean does not depend on the starting state, as mentioned just above. 

It can be shown that the value of Kemeny's constant is also related to the \emph{mixing time} of the Markov Chain, i.e., how fast the chain converges to the stationary distribution (see Appendix \ref{append:mixing} for additional details).

The following result connects the sensitivity of the stationary distribution to changes to the policy.
\begin{restatable}{lemma}{dandpi}\label{lemma:d_and_pi}
Under Assumption \ref{assump:irreducible}, the divergence between the stationary distributions $\dpi$ and $\dpip$ can be upper bounded by the average divergence between policies $\pi$ and $\pi'$:
\begin{equation}
    \TV{\dpip}{\dpi} \leq (\kappa^{\star}-1) \E_{s\sim \dpi}[\TV{\pi'}{\pi}[s]]
\end{equation}
where $\kappa^{\star} = \max_{\pi} \kappa^{\pi}$
\end{restatable}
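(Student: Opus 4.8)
The plan is to control $\TV{\dpip}{\dpi} = \tfrac12\norm{\dpip - \dpi}_1$ through a first-order perturbation identity for stationary distributions, and then to extract the factor $\kappa^\star - 1$ from the geometry of the fundamental matrix. First I would establish the identity
\[
(\dpip - \dpi)^{T} = \dpi^{T}(P_{\pi'} - P_\pi)\,\Zpip,
\]
which follows by writing $\dpi^{T}(I - P_{\pi'}) = -\dpi^{T}(P_{\pi'}-P_\pi)$ (using $\dpi^{T}P_\pi = \dpi^{T}$ and $\dpip^{T}P_{\pi'} = \dpip^{T}$), right-multiplying by $\Zpip$, and invoking $(I-P_{\pi'})\Zpip = I - P^\star_{\pi'}$ together with $\dpi^{T}\1 = 1$. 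I deliberately pair $\dpi$ with the fundamental matrix of the \emph{new} policy $\Zpip$ so that the expectation appearing at the end is taken over $\dpi$, as the statement requires; the relevant Kemeny constant is then $\kappa^{\pi'}\le\kappa^\star$.

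The next step is to replace $\Zpip$ by the deviation matrix (group inverse) $\Zpip - \1\dpip^{T}$. This is legitimate because each row of $P_{\pi'}-P_\pi$ sums to zero, so $(P_{\pi'}-P_\pi)\1 = 0$ and the correction term $\1\dpip^{T}$ is annihilated. Writing $v^{T} := \dpi^{T}(P_{\pi'}-P_\pi)$, I note that $v^{T}\1 = 0$ and $\norm{v}_1 \le 2\,\E_{s\sim\dpi}[\TV{\pi'}{\pi}[s]]$, the latter because row $s$ of $P_{\pi'}-P_\pi$ equals $\sum_a P(\cdot|s,a)(\pi'(a|s)-\pi(a|s))$, whose $\ell_1$ norm is at most $\sum_a|\pi'(a|s)-\pi(a|s)| = 2\TV{\pi'}{\pi}[s]$.

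It then remains to prove the operator bound: for every balanced $v$ (i.e.\ $v^{T}\1 = 0$), $\norm{v^{T}(\Zpip - \1\dpip^{T})}_1 \le (\kappa^{\pi'}-1)\norm{v}_1$. Because $v$ is balanced, I may subtract an arbitrary constant vector $c$ from the columns without changing the left-hand side, so the task reduces to choosing $c$ so that every row of $\Zpip - \1\dpip^{T} - \1 c^{T}$ has small $\ell_1$ norm. The key choice is to set $c_{s'}$ equal to the diagonal entry of the deviation matrix. Using the paper's formula $\Mpip = (I - \Zpip + E\Zpip_{\dg})D^{\pi'}$ and the substitution $\Zpip(s,s') = [\Zpip - \1\dpip^{T}](s,s') + \dpip(s')$, one finds that the centered $(s,s')$ entry equals $-\Mpip(s,s')\dpip(s')$ for $s\ne s'$ and $0$ for $s=s'$; hence the $\ell_1$ norm of each row is exactly $\sum_{s'\ne s}\Mpip(s,s')\dpip(s') = \kappa^{\pi'} - \Mpip(s,s)\dpip(s) = \kappa^{\pi'}-1$, where I used Kemeny's identity $\sum_{s'}\dpip(s')\Mpip(s,s') = \kappa^{\pi'}$ and $\Mpip(s,s) = 1/\dpip(s)$. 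Chaining the three steps, bounding $\kappa^{\pi'}\le\kappa^\star$, and halving yields the claim.

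I expect the operator bound in the third step to be the crux. The subtle point is that a bound on $\norm{v^{T}M}_1$ by a trace-like quantity $\Tr(M)$ cannot hold for general $M$; it works here only because the balanced constraint $v^{T}\1=0$ permits recentering the columns, and, crucially, because Kemeny's constant does not depend on the starting state, so that after recentering by the diagonal \emph{every} row has the same $\ell_1$ norm $\kappa^{\pi'}-1$. Identifying the correct centering constant and translating between the fundamental matrix, the mean-first-passage matrix, and Kemeny's constant is the main work; the perturbation identity and the policy-to-transition bound are routine by comparison.
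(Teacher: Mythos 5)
Your proof is correct and is essentially the paper's own argument in different clothing: your perturbation identity is exactly the paper's $\dpip^T - \dpi^T = \dpi^T(P_{\pi'}-P_{\pi})\Zpip$, and your centered matrix $\Zpip - \1\dpip^T - \1 c^T$ is entrywise identical to the paper's matrix $I - \Mpip (D^{\pi'})^{-1}$ (off-diagonal entries $-\Mpip(s,s')\dpip(s')$, zero diagonal), so your key operator bound is precisely the paper's computation $\norm{I - \Mpip (D^{\pi'})^{-1}}_{\infty} = \kappa^{\pi'}-1$, combined with the same submultiplicativity step and the same bound $\norm{(P_{\pi'}^T-P_{\pi}^T)\dpi}_1 \le 2\E_{s\sim\dpi}[\TV{\pi'}{\pi}[s]]$. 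The only cosmetic differences are that you derive the identity via $(I-P_{\pi'})\Zpip = I - P_{\pi'}^{\star}$ rather than by direct expansion, and you phrase the elimination of the rank-one terms as a centering argument where the paper instead substitutes $\Zpip = I + E\Zpip_{\dg} - \Mpip (D^{\pi'})^{-1}$ and invokes $(P_{\pi'}-P_{\pi})E = 0$.
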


For Markov chains with a small mixing time,
where an agent can quickly get to any state,
Kemeny's constant is relatively small and Lemma \ref{lemma:d_and_pi} shows that the stationary distributions are not highly sensitive to small changes in the policy. On the other hand, for Markov chains that that have high mixing times, the factor can become very large. In this case Lemma \ref{lemma:d_and_pi} shows that small changes in the policy can have a large impact on the resulting stationary distributions.

Combining the bounds in Lemma \ref{lemma:policy_impd} and Lemma \ref{lemma:d_and_pi} gives us the following result:
\begin{theorem}\label{thm:AvgR_policy_imp}
Under Assumption \ref{assump:irreducible} the following bounds hold for any two stochastic policies $\pi$ and $\pi'$, :
\begin{equation}\label{eq:AvgR_policy_imp}
 D_{\pi}^{-}(\pi')\leq \rho(\pi') - \rho(\pi)\leq D_{\pi}^{+}(\pi') 
\end{equation}
where
\begin{align*}
    D_{\pi}^{\pm}(\pi') = \E_{\substack{s\sim \dpi\\ a\sim\pi'}}\left[\adv(s,a)\right] \pm 2\xi\E_{s\sim \dpi}[\TV{\pi'}{\pi}[s]]
\end{align*}
and $\xi = (\kappa^{\star} - 1)\max_{s}\E_{a\sim\pi'}|\adv(s,a)|$.
\end{theorem}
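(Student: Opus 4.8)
The plan is to chain Lemma~\ref{lemma:policy_impd} and Lemma~\ref{lemma:d_and_pi} together; essentially no new machinery is required beyond a single triangle-inequality step that converts the constant $\epsilon$ appearing in Lemma~\ref{lemma:policy_impd} into the constant $\xi$ appearing in the theorem.

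First I would unpack the absolute-value bound of Lemma~\ref{lemma:policy_impd} into its two one-sided forms. Writing $L := \E_{s\sim\dpi,\, a\sim\pi'}[\adv(s,a)]$ for the surrogate term, Lemma~\ref{lemma:policy_impd} states that $|\rho(\pi') - \rho(\pi) - L| \le 2\epsilon\,\TV{\dpip}{\dpi}$ with $\epsilon = \max_s|\E_{a\sim\pi'}[\adv(s,a)]|$, which is equivalent to
\[
L - 2\epsilon\,\TV{\dpip}{\dpi} \;\le\; \rho(\pi') - \rho(\pi) \;\le\; L + 2\epsilon\,\TV{\dpip}{\dpi}.
\]
Next I would enlarge the error term on both sides so that it matches the quantities in the theorem. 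Since $|\E_{a\sim\pi'}[\adv(s,a)]| \le \E_{a\sim\pi'}|\adv(s,a)|$ for every $s$ by the triangle inequality, taking the maximum over $s$ gives $\epsilon \le \max_s \E_{a\sim\pi'}|\adv(s,a)|$. Combining this with the bound on $\TV{\dpip}{\dpi}$ from Lemma~\ref{lemma:d_and_pi}, and using that $\kappa^{\star}-1 \ge 0$ together with the nonnegativity of all total-variation divergences so that the inequalities compose in the same direction, yields
\[
2\epsilon\,\TV{\dpip}{\dpi} \;\le\; 2(\kappa^{\star}-1)\Big(\max_s \E_{a\sim\pi'}|\adv(s,a)|\Big)\E_{s\sim\dpi}[\TV{\pi'}{\pi}[s]] \;=\; 2\xi\,\E_{s\sim\dpi}[\TV{\pi'}{\pi}[s]].
\]
Substituting this into both one-sided inequalities above produces exactly $D_{\pi}^{-}(\pi') \le \rho(\pi') - \rho(\pi) \le D_{\pi}^{+}(\pi')$, which is the claim.

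I do not expect a genuine obstacle here: the substantive content of the theorem lives entirely in the two cited lemmas, and in particular in Lemma~\ref{lemma:d_and_pi}, whose Kemeny-constant factor $\kappa^{\star}-1$ is what renders $\xi$ finite and the bound meaningful. The only point requiring a little care is that the error term is \emph{added} on the upper side and \emph{subtracted} on the lower side, so enlarging its magnitude weakens both bounds consistently and preserves the chain of inequalities in both directions.
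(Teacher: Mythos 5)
Your proposal is correct and follows essentially the same route as the paper, which obtains the theorem precisely by combining Lemma~\ref{lemma:policy_impd} with Lemma~\ref{lemma:d_and_pi}. You also correctly supply the one detail the paper leaves implicit: bridging Lemma~\ref{lemma:policy_impd}'s constant $\epsilon=\max_s\left|\E_{a\sim\pi'}[\adv(s,a)]\right|$ to the theorem's larger constant $\max_s\E_{a\sim\pi'}\left|\adv(s,a)\right|$ via $|\E[\cdot]|\le\E|\cdot|$, together with the nonnegativity of $\kappa^{\star}-1$ and of the total-variation terms so that the upper bounds multiply in the right direction.
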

The bounds in Theorem \ref{thm:AvgR_policy_imp} are guaranteed to be finite. Analogous to the discounted case, the multiplicative factor $\xi$ provides guidance on the step-sizes for policy updates. Note that Theorem \ref{assump:irreducible} holds for MDPs satisfying Assumption \ref{assump:irreducible}; in Appendix \ref{append:unichain_aperiodic} we discuss how a similar result can be derived for the more general aperiodic unichain case.

The bound in Theorem \ref{thm:AvgR_policy_imp} is given in terms of the TV divergence; however the KL divergence is more commonly used in practice. The relationship between the TV divergence and KL divergence is given by Pinsker's inequality \citep{tsybakov2008introduction}, which says that for any two distributions $p$ and $q$: $\TV{p}{q}\leq\sqrt{\KL{p}{q}/2}$. We can then show that
\begin{equation}\label{eq:tv-kl}
\begin{aligned}
  \E_{s\sim\dpi}[\TV{\pi'}{\pi}[s]] &\leq \E_{s\sim\dpi}[\sqrt{\KL{\pi'}{\pi}[s]/2}] \\
  &\leq \sqrt{\E_{s\sim\dpi}[\KL{\pi'}{\pi}][s]]/2}
\end{aligned}
\end{equation}
where the second inequality comes from Jensen's inequality. The inequality in \eqref{eq:tv-kl} shows that the bounds in Theorem \ref{thm:AvgR_policy_imp} still hold when $\E_{s\sim\dpi}[\TV{\pi'}{\pi}[s]]$ is  substituted with $\sqrt{\E_{s\sim\dpi}[\KL{\pi'}{\pi}][s]/2}$.

\subsection{Approximate Policy Iteration}
One direct consequence of Theorem \ref{thm:AvgR_policy_imp} is that iteratively maximizing the $D_{\pi}^{-}(\pi')$ term in the bound generates a monotonically improving sequence of policies w.r.t. the average reward objective. Algorithm \ref{alg:policy_iteration} gives an approximate policy iteration algorithm that produces such a sequence of policies.
\begin{algorithm}[tb]
  \caption{Approximate Average Reward Policy Iteration}
\label{alg:policy_iteration}
\begin{algorithmic}[1]
  \STATE {\bfseries Input:} $\pi_0$
  \FOR{$k=0,1,2,\dots$}
  \STATE Policy Evaluation: Evaluate $\advk(s,a)$ for all $s,a$
  \STATE Policy Improvement: 
  \begin{equation}\label{eq:politer_update}
      \pi_{k+1} = \argmax_{\pi} D_{\pik}^{-}(\pi)
  \end{equation}
  where
  \begin{align*}
  D_{\pik}^{-}(\pi) =& \E_{\substack{s\sim \dpik\\ a\sim\pi}}\left[\wb{A}^{\pik}(s,a)\right] \\
  &- \xi\sqrt{2\E_{s\sim\dpik }[\KL{\pi}{\pik}[s]]}
  \end{align*}
  and $\xi =(\kappa^{\star}-1) \max_{s}\E_{a\sim\pi}|\advk(s,a)|$
  \ENDFOR
\end{algorithmic}
\end{algorithm}
\begin{proposition}
Given an initial policy $\pi_0$, Algorithm \ref{alg:policy_iteration} is guaranteed to generate a sequence of policies $\pi_1,\pi_2,\dots$ such that $\rho(\pi_0)\leq\rho(\pi_1)\leq\rho(\pi_2)\leq\cdots$.
\end{proposition}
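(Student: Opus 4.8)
The plan is to establish the single-step guarantee $\rho(\pi_{k+1}) \geq \rho(\pi_k)$ for every $k$, and then chain these inequalities by induction on $k$ to obtain $\rho(\pi_0) \leq \rho(\pi_1) \leq \cdots$. The engine is Theorem~\ref{thm:AvgR_policy_imp}. After substituting the KL penalty using the chain in \eqref{eq:tv-kl} and noting the algebraic identity $2\xi\sqrt{\E_{s\sim\dpik}[\KL{\pi}{\pik}[s]]/2} = \xi\sqrt{2\E_{s\sim\dpik}[\KL{\pi}{\pik}[s]]}$, the (weakened) lower bound of the theorem reads, for any policy $\pi$,
\[
\rho(\pi) - \rho(\pik) \;\geq\; D_{\pik}^{-}(\pi),
\]
where $D_{\pik}^{-}$ is precisely the surrogate maximized in line~4 of Algorithm~\ref{alg:policy_iteration}.

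The key step is to evaluate this surrogate at the current policy, i.e.\ to compute $D_{\pik}^{-}(\pik)$. First, the advantage term vanishes: by the definitions of the bias and action-bias functions, $\E_{a\sim\pik}[\advk(s,a)] = \E_{a\sim\pik}[\wb{Q}^{\pik}(s,a)] - \wb{V}^{\pik}(s) = 0$ for every $s$, so the first expectation in $D_{\pik}^{-}$ is zero. Second, the penalty term vanishes: since $\KL{\pik}{\pik}[s] = 0$ for all $s$, the square-root factor is zero, and this annihilates the penalty \emph{regardless} of the value of $\xi$ (which depends on $\pi$ through $\max_s \E_{a\sim\pi}|\advk(s,a)|$). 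Hence $D_{\pik}^{-}(\pik) = 0$.

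Next I would invoke optimality of $\pi_{k+1}$. Since $\pi_{k+1} = \argmax_{\pi} D_{\pik}^{-}(\pi)$ and $\pik$ is a feasible point of that maximization, we obtain $D_{\pik}^{-}(\pi_{k+1}) \geq D_{\pik}^{-}(\pik) = 0$. Applying the lower bound above with $\pi = \pi_{k+1}$ then gives
\[
\rho(\pi_{k+1}) - \rho(\pik) \;\geq\; D_{\pik}^{-}(\pi_{k+1}) \;\geq\; 0,
\]
so $\rho(\pi_{k+1}) \geq \rho(\pik)$. Iterating this for all $k \geq 0$ yields the claimed monotone sequence.

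I do not anticipate a substantive obstacle: this is the average-reward analogue of the classical monotonic-improvement argument for TRPO, and its only delicate points are bookkeeping. Specifically, one must verify that the KL-form surrogate defined in the algorithm coincides with the relaxed lower bound of Theorem~\ref{thm:AvgR_policy_imp} (via Pinsker's inequality and Jensen's inequality as in \eqref{eq:tv-kl}), and that the dependence of $\xi$ on the candidate policy $\pi$ is harmless at $\pi = \pik$ because the zero KL term eliminates the entire penalty. Both are routine once made explicit, so the proof is short.
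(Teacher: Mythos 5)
Your proof is correct and follows essentially the same route as the paper's: both evaluate the surrogate $D_{\pik}^{-}$ at $\pi=\pik$ to get zero (vanishing advantage expectation and vanishing KL), invoke optimality of $\pi_{k+1}$ in \eqref{eq:politer_update}, and then apply the lower bound of Theorem~\ref{thm:AvgR_policy_imp} (in its KL form via \eqref{eq:tv-kl}). Your write-up is merely more explicit than the paper's two-line argument, e.g.\ in noting that the dependence of $\xi$ on the candidate policy is harmless once the KL factor is zero.
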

\begin{proof}
At iteration $k$, $\E_{s\sim\dpik, a\sim\pi}[\advk(s,a)]=0$, $\E_{s\sim\dpik}[\KL{\pi}{\pik}[s]]=0$ for $\pi=\pik$. By Theorem \ref{thm:AvgR_policy_imp} and \eqref{eq:politer_update}, $\rho(\pikup)-\rho(\pik)\geq 0$.
\end{proof}
However, Algorithm \ref{alg:policy_iteration} is difficult to implement in practice since it requires exact knowledge of $\advk(s,a)$ and the transition matrix. Furthermore, calculating the term $\xi$ is impractical for high-dimensional problems. In the next section, we will introduce a sample-based algorithm which approximates the update rule in Algorithm \ref{alg:policy_iteration}.

\section{Practical Algorithm}

As noted in the previous section, Algorithm \ref{alg:policy_iteration} is not practical for problems with large state and action spaces. In this section, we will discuss how Algorithm \ref{alg:policy_iteration} and Theorem \ref{thm:AvgR_policy_imp} can be used in practice to create algorithms which can effectively solve high dimensional DRL problems with the use of \emph{trust region} methods.

In Appendix \ref{append:acpo}, we will also discuss how Theorem \ref{thm:AvgR_policy_imp} can be used to solve DRL problems with average cost safety constraints. RL with safety constraints are an important class of problems with practical implications \citep{amodei2016concrete}. Trust region methods have been successfully applied to this class of problems as it provides worst-case constraint violation guarantees for evaluating the cost constraint values for policy updates \citep{achiam2017constrained,yang2020projection,zhang2020first}. However the aforementioned theoretical guarantees were only shown to apply to discounted cost constraints. \citet{tessler2018reward} pointed out that trust-region based methods such as the Constrained Policy Optimization (CPO) algorithm \citep{achiam2017constrained} cannot be used for average costs constraints. Contrary to this belief, in Appendix \ref{append:acpo}, we demonstrate that Theorem \ref{thm:AvgR_policy_imp} provides a worst-case constraint violation guarantee for average costs and trust-region-based constrained RL methods can easily be modified to accommodate for average cost constraints.

\subsection{Average Reward Trust Region Methods}\label{sec:atrpo}
For DRL problems, it is common to consider some parameterized policy class $\Pi_{\Theta} = \{\pi_{\theta}: \theta \in \Theta \}$. Our goal is to devise a computationally tractable version of Algorithm \ref{alg:policy_iteration} for policies in $\Pi_{\Theta}$. We can rewrite the unconstrained optimization problem in \eqref{eq:politer_update} as a constrained problem:
\begin{equation}\label{eq:atrpo}
\begin{aligned}
  & \underset{\pol\in\Pi_{\Theta}}{\text{maximize}} \quad
\E_{\substack{s\sim\dpolk\\ a\sim\pol}}[\wb{A}^{\polk}(s,a)] \\
& \text{subject to} \quad \avKL{\pol}{\polk} \leq \delta 
\end{aligned}
\end{equation}
where $\avKL{\pol}{\polk} := \E_{s\sim\dpolk}[\KL{\pol}{\polk}[s]]$. 
Importantly, the advantage function $\wb{A}^{\polk}(s,a)$ appearing in \eqref{eq:atrpo} is the average-reward advantage function, defined as the bias minus the action-bias, and not the discounted advantage function. The constraint set $\{\pol\in\Pi_{\Theta}:\avKL{\pol}{\polk}\leq \delta\}$ is called the \textit{trust region set}. The problem \eqref{eq:atrpo} can be regarded as an average reward variant of the trust region problem from \citet{schulman2015trust}. The step-size $\delta$ is treated as a hyperparamter in practice and should ideally be tuned for each specific task. However we note that in the average reward, the choice of step-size is related to the mixing time of the underlying Markov chain (since it is related to the multiplicative factor $\xi$ in Theorem \ref{thm:AvgR_policy_imp}). When the mixing time is small, a larger step-size can be chosen and vice versa. While it is impractical to calculate the optimal step-size, in certain applications domain knowledge on the mixing time can be used to serve as a guide for tuning $\delta$.

When we set $\polkup$ to be the optimal solution to \eqref{eq:atrpo}, similar to the discounted case, the policy improvement guarantee no longer holds. However we can show that $\polkup$ has the following worst-case performance degradation guarantee:
\begin{proposition}\label{prop:atrpo_perf}
Let $\polkup$ be the optimal solution to \eqref{eq:atrpo} for some $\polk\in\Pi_{\Theta}$. The policy performance difference between $\polkup$ and $\polk$ can be lower bounded by
\begin{equation}
    \rho(\polkup) - \rho(\polk) \geq -\xi^{\polkup}\sqrt{2\delta}
\end{equation}
where $\xi^{\polkup}=(\kappa^{\polkup}-1)\max_{s}\E_{a\sim\polkup}|\wb{A}^{\polk}(s,a)|$.
\end{proposition}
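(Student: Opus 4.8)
The plan is to exploit the optimality and feasibility of $\polkup$ in \eqref{eq:atrpo} together with the lower bound $D_{\polk}^{-}(\polkup)$ guaranteed by Theorem \ref{thm:AvgR_policy_imp}. First I would note that $\polk$ is itself a feasible point of \eqref{eq:atrpo}, since $\avKL{\polk}{\polk}=0\leq\delta$, and that its objective value vanishes, because $\E_{a\sim\polk}[\wb{A}^{\polk}(s,a)]=0$ for every $s$ by the definition of the advantage function. As $\polkup$ is the maximizer, its objective value can be no smaller than that of $\polk$, so
\begin{equation*}
\E_{\substack{s\sim\dpolk\\ a\sim\polkup}}[\wb{A}^{\polk}(s,a)] \geq \E_{\substack{s\sim\dpolk\\ a\sim\polk}}[\wb{A}^{\polk}(s,a)] = 0 .
\end{equation*}

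Next I would apply the lower bound of Theorem \ref{thm:AvgR_policy_imp} with $\pi=\polk$ and $\pi'=\polkup$. Because the advantage term is now known to be nonnegative, it can be discarded, leaving
\begin{equation*}
\rho(\polkup)-\rho(\polk) \geq -2\,\xi^{\polkup}\,\E_{s\sim\dpolk}\big[\TV{\polkup}{\polk}[s]\big].
\end{equation*}
The remaining task is to bound the expected total variation by the trust-region radius. Chaining Pinsker's and Jensen's inequalities exactly as in \eqref{eq:tv-kl}, and then invoking the constraint $\avKL{\polkup}{\polk}\leq\delta$, gives
\begin{equation*}
\E_{s\sim\dpolk}\big[\TV{\polkup}{\polk}[s]\big] \leq \sqrt{\tfrac{1}{2}\,\avKL{\polkup}{\polk}} \leq \sqrt{\delta/2} .
\end{equation*}
Substituting and using $2\sqrt{\delta/2}=\sqrt{2\delta}$ yields the claimed bound $\rho(\polkup)-\rho(\polk)\geq-\xi^{\polkup}\sqrt{2\delta}$.

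The step that needs care — and the main obstacle — is that the constant in $\xi^{\polkup}$ is the \emph{policy-specific} Kemeny constant $\kappa^{\polkup}$, whereas Theorem \ref{thm:AvgR_policy_imp} is stated with the global maximum $\kappa^{\star}=\max_{\pi}\kappa^{\pi}$. To close this gap I would return to the sensitivity estimate underlying Lemma \ref{lemma:d_and_pi}. Starting from the perturbation identity $\dpip^{\top}-\dpi^{\top}=\dpi^{\top}(P_{\pi'}-P_{\pi})Z^{\pi'}$ — which follows from $(I-P_{\pi'})Z^{\pi'}=I-P_{\pi'}^{\star}$ together with $\dpi^{\top}P_{\pi'}^{\star}=\dpip^{\top}$ — the fundamental matrix that appears is that of the \emph{new} chain, so the bound of Lemma \ref{lemma:d_and_pi} in fact holds with $\kappa^{\pi'}=\Tr(Z^{\pi'})$ in place of $\kappa^{\star}$; the maximization over $\pi$ in the stated lemma was performed only to obtain a policy-independent constant convenient for Algorithm \ref{alg:policy_iteration}. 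Since here $\pi'=\polkup$ is a single fixed policy, I retain $\kappa^{\polkup}$ throughout, which is precisely what produces $\xi^{\polkup}$ in the final inequality.
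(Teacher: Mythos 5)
Your proof is correct and follows essentially the same route as the paper's: feasibility of $\polk$ with zero objective value, the lower bound of Theorem \ref{thm:AvgR_policy_imp}, and Pinsker/Jensen together with the trust-region constraint exactly as in \eqref{eq:tv-kl}. Your closing observation is also the right one — the paper's one-line proof cites Theorem \ref{thm:AvgR_policy_imp} verbatim (whose constant involves the global $\kappa^{\star}$) even though the proposition's constant is the policy-specific $\kappa^{\polkup}$; as you note, the proof of Lemma \ref{lemma:d_and_pi} actually establishes the perturbation bound with $\kappa^{\pi'}-1$ before weakening it to $\kappa^{\star}-1$, and retaining that intermediate form is precisely what justifies $\xi^{\polkup}$ in the statement.
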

\begin{proof}
Since $\avKL{\polk}{\polk}=0$, $\polk$ is feasible. The objective value is 0 for $\pol=\polk$. The bound follows from \eqref{eq:AvgR_policy_imp} and \eqref{eq:tv-kl} where the average KL is bounded by $\delta$.
\end{proof}

Several algorithms have been proposed for efficiently solving the discounted version of \eqref{eq:atrpo}: \citet{schulman2015trust} and \citet{wu2017scalable}
converts \eqref{eq:atrpo} into a convex problem via Taylor approximations; another approach is to first solve $\eqref{eq:atrpo}$ in the non-parametric policy space and then project the result back into the parameter space \citep{vuong2019supervised,song2020v}. These algorithms can also be adapted for the average reward case and are theoretically justified via Theorem \ref{thm:AvgR_policy_imp} and Proposition \ref{prop:atrpo_perf}. In the next section, we will provide as a specific example how this can be done for one such algorithm.

\subsection{Average Reward TRPO (ATRPO)}
In this section, we introduce ATRPO, which is an average-reward modification of the TRPO algorithm \citep{schulman2015trust}. Similar to TRPO, we apply Taylor approximations to \eqref{eq:atrpo}. This gives us a new optimization problem which can be solved exactly using Lagrange duality \citep{boyd2004convex}. The solution to this approximate problem gives an explicit update rule for the policy parameters which then allows us to perform policy updates using an actor-critic framework. More details can be found in Appendix \ref{append:atrpo}. Algorithm \ref{alg:atrpo} provides a basic outline of ATRPO.

\begin{algorithm}[tb]
  \caption{Average Reward TRPO (ATRPO)}
\label{alg:atrpo}
\begin{algorithmic}[1]
  \STATE {\bfseries Input:} Policy parameters $\theta_0$, critic net parameters $\phi_0$, learning rate $\alpha$, trajectory truncation parameter $N$.
  \FOR{$k=0,1,2,\cdots$}
  \STATE Collect a truncated trajectory $\{s_t, a_t, s_{t+1}, r_t\},\; t=1,\dots,N$ from the environment using $\polk$.
  \STATE Calculate sample average reward of $\polk$ via \\
  $\rho = \frac{1}{N}\sum_{t=1}^{N} r_t$.
  \FOR{$t=1,2,\dots,N$}
  \STATE Get target $\vtarg_t = r_t - \rho + \wb{V}_{\phi_k}(s_{t+1})$
  \STATE Get advantage estimate:\\ $\hat{A}(s_t,a_t) = r_t - \rho + \wb{V}_{\phi_k}(s_{t+1}) - \wb{V}_{\phi_k}(s_{t}) $
  \ENDFOR
     \STATE Update critic by
    \[
    \phi_{k+1} \gets \phi_k - \alpha\nabla_{\phi}\cL(\phi_k)
    \]
    where
    \[
    \cL(\phi_k) = \frac{1}{N}\sum_{t=1}^{N}\norm{\bar{V}_{\phi_k}(s_{t})-\vtarg_t}^2
    \]
    \STATE Use $\hat{A}(s_t,a_t)$ to update $\theta_k$ using TRPO policy update \citep{schulman2015trust}.
  \ENDFOR
\end{algorithmic}
\end{algorithm}

The major differences between ATRPO and TRPO are as follows: 
\begin{enumerate}[i]
    \item The critic network in Algorithm \ref{alg:atrpo} approximates the average-reward bias rather than the discounted value function.
    \item ATRPO must estimate the average return $\rho$ of the current policy.
    \item The targets for the bias and the advantage are calculated without discount factors and the average return $\rho$ is subtracted from the reward. Simply setting the discount factor to 1 in TRPO does not lead to Algorithm \ref{alg:atrpo}. 
    \item ATRPO also assumes that the underlying task is a continuing infinite-horizon task. But since in practice we cannot run infinitely long trajectories, all trajectories are truncated at some large truncation value $N$. Unlike TRPO, during training we do not allow for episodic tasks where episodes terminate early (before $N$). For the MuJoCo environments, we will address this by having the agent not only resume locomotion after falling  but also incur a penalty for falling (see Section \ref{sec:experiments}.) 
\end{enumerate}
In Algorithm 2, for illustrative purposes, we use the average reward one-step bootstrapped estimate for the target of the critic and the advantage function. In practice, we instead develop and use an average-reward version of the Generalized Advantage Estimator (GAE) from \citet{schulman2016high}. In Appendix \ref{append:GAE} we provide more details on how GAE can be generalized to the average-reward case.

\begin{figure*}[ht]
    \centering
    \includegraphics[width=0.9\textwidth]{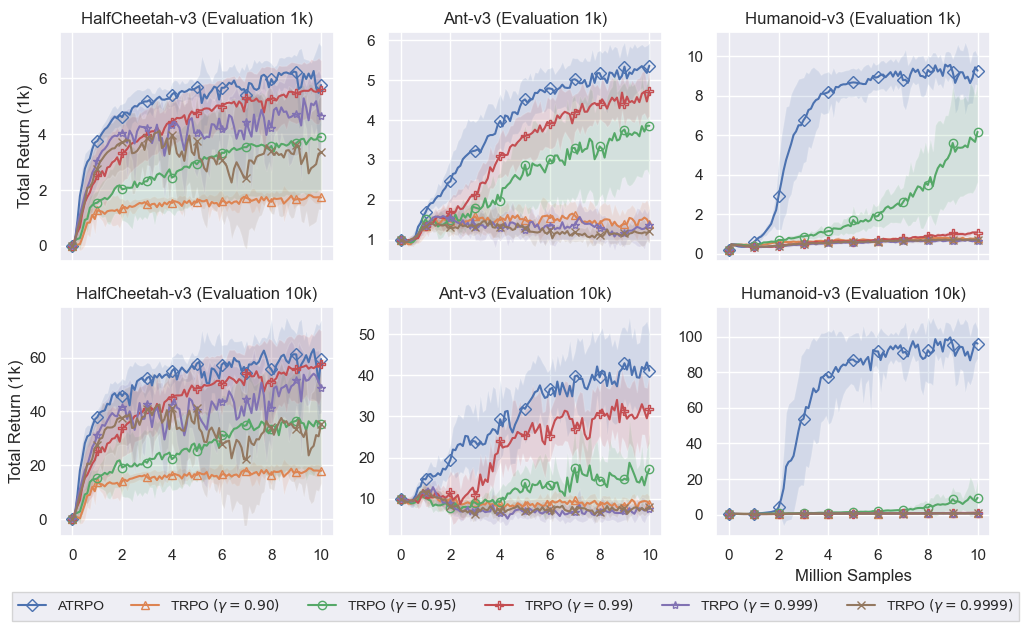}
    \caption{Comparing performance of ATRPO and TRPO with different discount factors. The $x$-axis is the number of agent-environment interactions and the $y$-axis is the total return averaged over 10 seeds. The solid line represents the agents' performance on evaluation trajectories of maximum length 1,000 (top row) and 10,000 (bottom row). The shaded region represents one standard deviation.}
    \label{fig:trpo}
\end{figure*}

\section{Experiments}\label{sec:experiments}

We conducted experiments comparing the performance of ATRPO and TRPO on continuing control tasks. We consider three tasks (Ant, HalfCheetah, and Humanoid) from the MuJoCo physical simulator \citep{todorov2012mujoco} implemented using OpenAI gym \citep{brockman2016openai}, where the natural goal is to train the agents to run as fast as possible without falling.

\subsection{Evaluation Protocol}\label{sec:eval}

Even though the MuJoCo benchmark is commonly trained using the \emph{discounted} objective (see e.g. \citet{schulman2015trust}, \citet{wu2017scalable},
\citet{lillicrap2016continuous},
\citet{schulman2017proximal},
\citet{haarnoja2018soft}, \citet{vuong2019supervised}), it is {\em always} evaluated without discounting. Similarly, we also evaluate performance
using the undiscounted total-reward objective for both TRPO and ATRPO.

Specifically for each environment, we train a policy for 10 million environment steps. During training, every 100,000 steps, we run 10 separate evaluation trajectories with the current policy without exploration (i.e., the policy is kept fixed and deterministic). For each evaluation trajectory we calculate the undiscounted return of the trajectory until the agent falls or until 1,000 steps, whichever comes first. We then report the average undiscounted return over the 10 trajectories. \emph{Note that this is the standard evaluation metric for the MuJoCo environments.}
In order to understand the performance of the agent for long time horizons, we also report the performance of the agent evaluated on trajectories of maximum length 10,000. 

\begin{figure*}[ht]
\centering
\includegraphics[width=\textwidth]{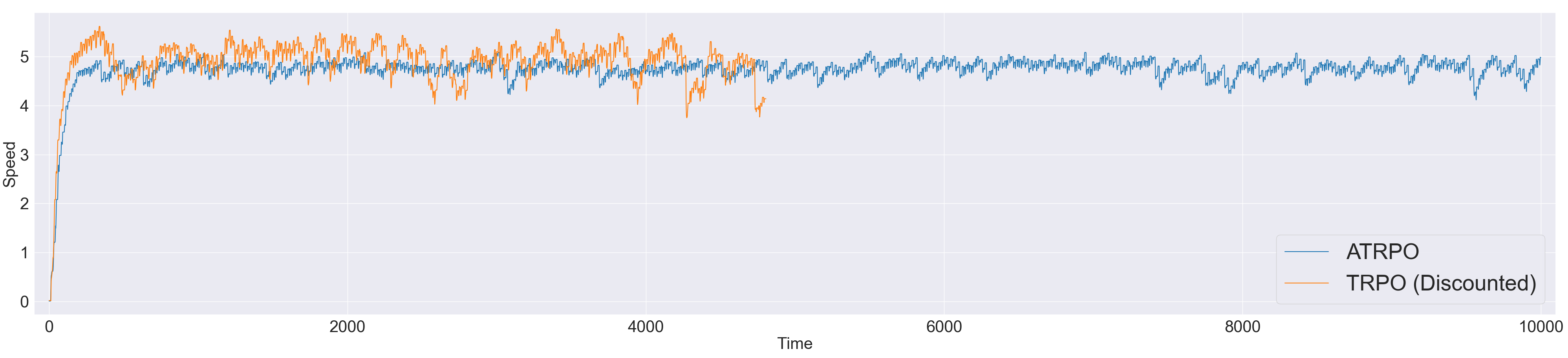}
\caption{Speed-time plot of a single trajectory (maximum length 10,000) for ATRPO and Discounted TRPO in the Humanoid-v3 environment. The solid line represents the speed of the agent at the corresponding timesteps.}
\label{fig:trpo_reset_traj}
\end{figure*}

\subsection{Comparing ATRPO and TRPO}

To simulate an infinite-horizon setting during training, we do the following: when the agent falls, the trajectory does not terminate; instead the agent incurs a large reset cost for falling, and then continues the trajectory from a random start state. The reset cost is set to 100. However, we show in the supplementary material (Appendix \ref{append:sensitivity}) that the results are largely insensitive to the choice of reset cost. We note that this modification does not change the underlying goal of the task. 
We also point out that the reset cost is only applied during training and is not used in the evaluation phase described in the previous section. Hyperparameter settings and other additional details can be found in Appendix \ref{append:experiment}.

We plot the performance for ATRPO and TRPO trained with different discount factors in Figure \ref{fig:trpo}. We see that TRPO with its best discount factor can perform as well as ATRPO for the simplest environment HalfCheetah.
But ATRPO provides dramatic improvements in Ant and Humanoid.
In particular for the most challenging environment Humanoid, ATRPO performs on average $50.1\%$ better than TRPO with its best discount factor when evaluated on trajectories of maximum length 1000. The improvement is even greater when the agents are evaluated on trajectories of maximum length 10,000 where the performance boost jumps to $913\%$. 
In Appendix \ref{append:trpo_mod}, we provide an additional set of experiments demonstrating that ATRPO also significantly outperforms TRPO when TRPO is trained without the reset scheme described at the beginning of this section (i.e. the standard MuJoCo setting.) 

We make two observations regarding discounting. First, we note that increasing the discount factor does not necessarily lead to better performance for TRPO. A larger discount factor in principle enables the algorithm to seek a policy that performs well for the average-reward criterion \citep{blackwell1962discrete}. Unfortunately, a larger discount factor can also increase the variance of the gradient estimator \citep{zhao2011analysis,schulman2016high}, increase the complexity of the policy space \citep{jiang2015dependence}, lead to slower convergence \citep{bertsekas1995dynamic,agarwal2020optimality}, and degrade generalization in limited data settings \citep{amit2020discount}.
Moreover, algorithms with discounting are known to become unstable as $\gamma\to 1$ \citep{naik2019discounted}. Secondly, for TRPO the best discount factor is different for each environment (0.99 for HalfCheetah and Ant, 0.95 for Humanoid). The discount factor therefore serves as a hyperparameter which can be tuned to improve performance, choosing a suboptimal discount factor can have significant consequences. Both of these observation are consistent with what was seen in the literature \citep{andrychowicz2020matters}. We have shown here that using the average reward criterion directly not only delivers superior performance but also obviates the need to tune the discount factor.

\subsection{Understanding Long Run Performance}

Next, we demonstrate that agents trained using the average reward criterion are better at optimizing for long-term returns. Here, we first train Humanoid with 10 million samples with ATRPO and with TRPO with a discount factor of 0.95 (shown to be the best discount factor in the previous experiments). 
Then for evaluation, we run the trained ATRPO and TRPO policies for a trajectory of 10,000 timesteps (or until the agent falls). We use the same random seeds for the two algorithms. Figure \ref{fig:trpo_reset_traj} is a plot of the speed of the agent at each time step of the trajectory, using the \emph{seed that gives the best performance for discounted TRPO}. We see in Figure \ref{fig:trpo_reset_traj} that the discounted algorithm gives a higher initial speed at the beginning of the trajectory. However its overall speed is much more erratic throughout the trajectory, resulting in the agent falling over after approximately 5000 steps. This coincides with the notion of discounting where more emphasis is placed at the beginning of the trajectory and ignores longer-term behavior. On the other hand,  the average-reward policy \textemdash\, while having a slightly lower velocity overall throughout its trajectory \textemdash\, is able to sustain the trajectory much longer, thus giving it a higher total return. In fact, we observed that for all 10 random seeds we tested, the average reward agent is able to finish the entire 10,000 time step trajectory without falling. In Table \ref{tab:traj_summary} we present the summary statistics of trajectory length for all trajectories using discounted TRPO we note that the median trajectory length for the TRPO discounted agent is 452.5, meaning that on average TRPO performs significantly worse than what is reported in Figure. \ref{fig:trpo_reset_traj}. 
\begin{table}[h]
    \centering
    \caption{Summary statistics for all 10 trajectories using a Humanoid-v3 agent trained with TRPO}
    \vskip 0.15in
    \begin{tabular}{*5c}
    \toprule
     Min & Max & Average & Median & Std \\
        \midrule
    108 & 4806 & 883.1 & 452.5 & 1329.902 \\
       \bottomrule
    \end{tabular}
    \label{tab:traj_summary}
\end{table}

\section{Related Work}
Dynamic programming algorithms for finding the optimal average reward policies have been well-studied \citep{howard1960dynamic,blackwell1962discrete,veinott1966finding}. Several tabular Q-learning-like algorithms for problems with unknown dynamics have been proposed, such as R-Learning \citep{schwartz1993reinforcement}, RVI Q-Learning \citep{abounadi2001learning},  CSV-Learning \citep{yang2016efficient}, and Differential Q-Learning \citep{wan2020learning}. \citet{mahadevan1996average} conducted a thorough empirical analysis of the R-Learning algorithm. We note that much of the previous work on average reward RL focuses on the tabular setting without function approximations, and the theoretical properties of many of these Q-learning-based algorithm are not well understood (in particular R-learning). More recently, POLITEX updates policies using a Boltzmann distribution over the
sum of action-value function estimates of the previous policies \citep{abbasi2019politex} and \citet{wei2020model} introduced a model-free algorithm for optimizing the average reward of weakly-communicating MDPs. 

For policy gradient methods, \citet{baxter2001infinite} showed that if $1/(1-\gamma)$ is large compared to the mixing time of the Markov chain induced by the MDP, then the gradient of $\rho_{\gamma}(\pi)$ can accurately approximate the gradient of $\rho(\pi)$. \citet{kakade2001optimizing} extended upon this result and provided an error bound on using an optimal discounted policy to maximize the average reward. In contrast, our work directly deals with the average reward objective and provides theoretical guidance on the optimal step size for each policy update.

Policy improvement bounds have been extensively explored in the discounted case. The results from \citet{schulman2015trust} are extensions of \citet{kakade2002approximately}. 
\citet{pirotta2013safe} also proposed an alternative generalization to \citet{kakade2002approximately}.  \citet{achiam2017constrained} improved upon \citet{schulman2015trust} by replacing the maximum divergence with the average divergence.

\section{Conclusion}
In this paper, we introduce a novel policy improvement bound for the average reward criterion. The bound is based on the average divergence between two policies and Kemeny's constant or mixing time of the Markov chain. We show that previous existing policy improvement bounds for the discounted case results in a non-meaningful bound for the average reward objective. Our work provides the theoretical justification and the means to generalize the popular trust-region based algorithms to the average reward setting. Based on this theory, we propose ATRPO, a modification of the TRPO algorithm for on-policy DRL. We demonstrate through a series of experiments that ATRPO is highly effective on high-dimensional continuing control tasks. 

% Acknowledgements should only appear in the accepted version.
\section*{Acknowledgements}
We would like to extend our gratitude to Quan Vuong and the anonymous reviewers for their constructive comments and suggestions. We also thank Shuyang Ling, Che Wang, Zining (Lily) Wang, and Yanqiu Wu for the insightful discussions on this work.

\newpage

\bibliography{example_paper}
\bibliographystyle{icml2021}

\newpage
\onecolumn

\icmltitle{On-Policy Deep Reinforcement Learning for the Average-Reward Criterion Supplementary Materials}
\appendix
\section{Relationship Between the Discounted and Average Reward Criteria}\label{append:disc_avg}

We first introduce the average reward Bellman equations \citep{sutton2018reinforcement}:
\begin{align}
    &\vfunc(s) = \sum_a \pi(a|s)\left[r(s,a) - \rho(\pi) + \sum_{s'}P(s'|s,a)\vfunc(s')\right] \label{eq:bellman_vv} \\
    &\qfunc(s,a) = r(s,a) - \rho(\pi)+ \sum_{s'}P(s'|s,a)\sum_{a'}\pi(a'|s')\qfunc(s',a').\label{eq:bellman_qq}
\end{align}
From which we can easily show that:
\begin{align}
    &\vfunc(s) = \sum_a \pi(a|s)\qfunc(s,a) \label{eq:bellman_vq}\\
    &\qfunc(s,a) = r(s,a) - \rho(\pi) + \sum_{s'}P(s'|s,a)\vfunc(s').\label{eq:bellman_qv}
\end{align}
Note that these equations take a slightly different form compared to the discounted Bellman equations, there are no discount factors and the rewards are now replaced with $r(s,a)-\rho(\pi)$.

The following classic result relates the value function in the discounted case and average reward bias functions.
\begin{proposition}[\citealp{blackwell1962discrete}]\label{prop:discount_relations}
For a given stationary policy $\pi$ and discount factor $\gamma\in (0,1)$,
\begin{equation}\label{eq:disc_v}
    \lim_{\gamma\to 1}\left(\vfuncd(s) - \frac{\rho(\pi)}{1-\gamma}\right) = \vfunc(s)
\end{equation}
for all $s\in\cS$.
\end{proposition}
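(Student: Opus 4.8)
The plan is to collapse the two terms on the left-hand side into a single discounted sum of \emph{centered} rewards and then push the limit $\gamma\to 1$ through the summation. Using $\frac{1}{1-\gamma}=\sum_{t=0}^{\infty}\gamma^t$, I would rewrite $\frac{\rho(\pi)}{1-\gamma}$ as $\E_{\tau\sim\pi}\left[\sum_{t=0}^{\infty}\gamma^t\rho(\pi)\mid s_0=s\right]$, so that (interchanging sum and expectation is legitimate since rewards are bounded and $\gamma<1$)
\begin{equation*}
\vfuncd(s)-\frac{\rho(\pi)}{1-\gamma}=\sum_{t=0}^{\infty}\gamma^t g_t(s),\qquad g_t(s):=\E_{\tau\sim\pi}\big[r(s_t,a_t)-\rho(\pi)\mid s_0=s\big].
\end{equation*}
The target bias function is, by definition, exactly $\vfunc(s)=\sum_{t=0}^{\infty}g_t(s)$. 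Thus the proposition reduces to the claim $\lim_{\gamma\to 1}\sum_{t}\gamma^t g_t(s)=\sum_t g_t(s)$, i.e. that the radial limit commutes with the infinite sum.

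The engine of the argument is that under Assumption~\ref{assump:irreducible} the centered terms $g_t(s)$ decay geometrically in $t$. Writing $\bar r(s')=\sum_a\pi(a|s')r(s',a)$ and recalling $\rho(\pi)=\sum_{s'}\dpi(s')\bar r(s')$, one has $g_t(s)=\sum_{s'}\big(P_\pi^t(s'|s)-\dpi(s')\big)\bar r(s')$. Since the chain induced by $\pi$ is irreducible and aperiodic, $P_\pi^t\to P_\pi^\star=\1\dpi^{T}$ geometrically fast (the rate being controlled by the subdominant eigenvalue modulus of $P_\pi$), which gives a bound $|g_t(s)|\le C\lambda^t$ for some finite $C$ and some $\lambda\in(0,1)$. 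This single bound does double duty: it certifies that $\vfunc(s)=\sum_t g_t(s)$ converges absolutely (hence is finite and well defined, as asserted in the preliminaries), and it supplies a summable envelope $|\gamma^t g_t(s)|\le C\lambda^t$ valid uniformly over $\gamma\in(0,1)$.

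With that envelope, the interchange is immediate: applying dominated convergence over the counting measure in $t$, and using $\gamma^t g_t(s)\to g_t(s)$ pointwise as $\gamma\to 1^-$, I can pass the limit inside to conclude $\lim_{\gamma\to 1}\big(\vfuncd(s)-\frac{\rho(\pi)}{1-\gamma}\big)=\sum_t g_t(s)=\vfunc(s)$. I expect the only genuine obstacle to be establishing the geometric decay of $g_t(s)$ cleanly; this is precisely where ergodicity is indispensable, since periodicity or a multichain structure would obstruct the convergence $P_\pi^t\to P_\pi^\star$. I note as an aside that one can sidestep even the geometric \emph{rate}: once the mere convergence of $\sum_t g_t(s)$ is known, Abel's theorem for power series delivers the radial limit directly, so the geometric bound is strictly needed only to justify that convergence.
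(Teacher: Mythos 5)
Your proof is correct, but be aware that the paper never actually proves this proposition --- it is quoted as a classical result of \citet{blackwell1962discrete} and used as a black box to derive Corollary~\ref{corollary:discount_relations}; so the comparison here is with the classical argument behind the citation rather than with anything in the paper. Your route is the elementary one: recenter the discounted sum so that $\vfuncd(s)-\rho(\pi)/(1-\gamma)=\sum_{t}\gamma^{t}g_{t}(s)$, use geometric ergodicity of a finite irreducible aperiodic chain to get the envelope $|g_{t}(s)|\le C\lambda^{t}$, then pass to the limit by dominated convergence (or, as you note, by Abel's theorem, which needs only convergence of $\sum_{t}g_{t}(s)$). Blackwell's proof instead proceeds through the Laurent expansion of the resolvent $(I-\gamma P_{\pi})^{-1}$ about $\gamma=1$, whose singular part is $(1-\gamma)^{-1}P_{\pi}^{\star}$ and whose constant term is the deviation matrix $\Zpi-P_{\pi}^{\star}$; that machinery is heavier but yields the proposition for \emph{any} finite MDP (periodic or multichain, with $P_{\pi}^{\star}$ and the bias defined via Cesaro limits), which is exactly the generality the paper alludes to when it remarks that the proposition ``applies to any MDP''. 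Your argument, by contrast, genuinely needs aperiodicity --- under periodicity $P_{\pi}^{t}$ does not converge and $\sum_{t}g_{t}(s)$ need not exist --- so it establishes the result only under Assumption~\ref{assump:irreducible} (extendable to the aperiodic unichain case, since transient states are exited geometrically fast), which is all the paper uses. Two points of hygiene: (i) when $P_{\pi}$ is not diagonalizable, the subdominant eigenvalue modulus controls the rate only up to polynomial factors in $t$, so take $\lambda$ strictly between the SLEM and $1$ to get a clean geometric bound; (ii) the paper's $\vfunc(s)$, written as an expectation of an infinite sum, should be read as $\lim_{N\to\infty}\sum_{t=0}^{N-1}g_{t}(s)$ --- the random series $\sum_{t}(r(s_{t},a_{t})-\rho(\pi))$ typically diverges almost surely --- and under that standard reading your identification $\vfunc(s)=\sum_{t}g_{t}(s)$ is the definition itself rather than an additional interchange to justify.
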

Note that Proposition \ref{prop:discount_relations} applies to any MDP, we will however restrict our discussion to the unichain case to coincide with the scope of the paper. From Proposition \ref{prop:discount_relations}, it is clear that $\lim_{\gamma\to 1}(1-\gamma)\rho_{\gamma}(\pi)=\rho(\pi)$, i.e. the discounted and average reward objective are equivalent in the limit as $\gamma$ approaches 1. We can derive similar relations for the action-bias function and advantage function.
\begin{corollary}\label{corollary:discount_relations}
For a given stationary policy $\pi$ and discount factor $\gamma\in (0,1)$,
\begin{align}
    &\lim_{\gamma\to 1}\left(\qfuncd(s,a) - \frac{\rho(\pi)}{1-\gamma}\right) = \qfunc(s,a) \label{eq:disc_q} \\
    &\lim_{\gamma\to 1}\advd(s,a) = \adv(s,a) \label{eq:disc_adv}
\end{align}
for all $s\in\cS$ and $a\in\cA$.
\end{corollary}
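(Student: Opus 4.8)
The plan is to reduce both limits in the corollary to the bias-function limit \eqref{eq:disc_v} already established in Proposition \ref{prop:discount_relations}, using the discounted and average-reward Bellman equations as bridges. The whole argument is essentially algebraic: the divergent quantity $\rho(\pi)/(1-\gamma)$ appears on both sides and must be shuffled so that Blackwell's limit can be applied termwise.

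First I would prove \eqref{eq:disc_q}. Starting from the standard discounted Bellman relation $\qfuncd(s,a) = r(s,a) + \gamma\sum_{s'}P(s'|s,a)\vfuncd(s')$, I subtract $\rho(\pi)/(1-\gamma)$ from both sides. The key step is to center the value function inside the transition sum: since $\sum_{s'}P(s'|s,a)=1$, I can insert and remove a copy of $\rho(\pi)/(1-\gamma)$, and the leftover constant terms collapse via $\gamma\rho(\pi)/(1-\gamma) - \rho(\pi)/(1-\gamma) = -\rho(\pi)$. This yields
\begin{equation*}
\qfuncd(s,a) - \frac{\rho(\pi)}{1-\gamma} = r(s,a) - \rho(\pi) + \gamma\sum_{s'}P(s'|s,a)\left(\vfuncd(s') - \frac{\rho(\pi)}{1-\gamma}\right).
\end{equation*}
Now I take $\gamma\to 1$. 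Because $\cS$ is finite the limit passes inside the sum, $\gamma\to 1$ in the prefactor, and by Proposition \ref{prop:discount_relations} each centered term $\vfuncd(s') - \rho(\pi)/(1-\gamma)$ converges to $\vfunc(s')$. The right-hand side therefore converges to $r(s,a) - \rho(\pi) + \sum_{s'}P(s'|s,a)\vfunc(s')$, which by the average-reward Bellman equation \eqref{eq:bellman_qv} is exactly $\qfunc(s,a)$. This establishes \eqref{eq:disc_q}.

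For \eqref{eq:disc_adv}, I would write the discounted advantage as a difference of two centered quantities so that the divergent terms cancel before taking any limit:
\begin{equation*}
\advd(s,a) = \left(\qfuncd(s,a) - \frac{\rho(\pi)}{1-\gamma}\right) - \left(\vfuncd(s) - \frac{\rho(\pi)}{1-\gamma}\right).
\end{equation*}
Letting $\gamma\to 1$ and applying the just-proved limit \eqref{eq:disc_q} to the first group and Proposition \ref{prop:discount_relations} (i.e.\ \eqref{eq:disc_v}) to the second gives $\qfunc(s,a) - \vfunc(s)$, which equals $\adv(s,a)$ by the definition of the average-reward advantage.

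There is no real obstacle here; the only points requiring care are the centering trick of adding and subtracting $\rho(\pi)/(1-\gamma)$ inside the transition sum so that Blackwell's limit applies termwise, and the (routine) justification that the limit commutes with the finite sum over $\cS$.
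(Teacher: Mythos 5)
Your proposal is correct and follows essentially the same route as the paper's proof: both expand $\qfuncd$ via the discounted Bellman equation, apply Proposition \ref{prop:discount_relations} termwise to the successor-state values, identify the limit with $\qfunc$ through the average-reward Bellman equation \eqref{eq:bellman_qv}, and obtain \eqref{eq:disc_adv} by cancelling the $\rho(\pi)/(1-\gamma)$ terms in the difference $\qfuncd - \vfuncd$. The only difference is bookkeeping style — you center by subtracting $\rho(\pi)/(1-\gamma)$ inside the sum and pass to the limit directly, whereas the paper carries explicit vanishing remainder terms $g^{\pi}(\gamma,s)$ and $h^{\pi}(\gamma,s,a)$ — which is immaterial.
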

\begin{proof}
From Proposition \ref{prop:discount_relations}, we can rewrite \eqref{eq:disc_v} as
\begin{equation}
    \vfuncd(s) = \frac{\rho(\pi)}{1-\gamma} + \vfunc(s) + g(\gamma,s)
\end{equation}
where $\lim_{\gamma\to 1}g(\gamma,s)=0$. We then expand $\qfuncd(s,a)$ using the Bellman equation
\begin{align*}
    \qfuncd(s,a) &= r(s,a) + \gamma\sum_{s'}P(s'|s,a)\vfuncd(s') \\
    &= r(s,a) + \gamma\sum_{s'}P(s'|s,a)\left(\frac{\rho(\pi)}{1-\gamma} + \vfunc(s') + g^{\pi}(\gamma,s')\right) \\
    &= r(s,a) + \frac{\gamma\rho(\pi)}{1-\gamma} + \gamma\sum_{s'}P(s'|s,a)\left(\vfunc(s') + g^{\pi}(\gamma,s')\right) \\
    &= r(s,a) - \rho(\pi) + \frac{\rho(\pi)}{1-\gamma} + \sum_{s'}P(s'|s,a)\vfunc(s') \\
    & - (1-\gamma)\sum_{s'}P(s'|s,a)\vfunc(s') + \gamma\sum_{s'}P(s'|s,a)g^{\pi}(\gamma,s') \\
    &= \qfunc(s,a) + \frac{\rho(\pi)}{1-\gamma} - (1-\gamma)\sum_{s'}P(s'|s,a)\vfunc(s') + \gamma\sum_{s'}P(s'|s,a)g^{\pi}(\gamma,s') 
\end{align*}
where we used Proposition \ref{prop:discount_relations} for the second equality. Note that the last two terms in the last equality approach 0 as $\gamma\to 1$, rearranging the terms and taking the limit for $\gamma\to 1$ gives us Equation \eqref{eq:disc_q}.

We can then similarly rewrite \eqref{eq:disc_q} as
\begin{equation}
    \qfuncd(s,a) = \frac{\rho(\pi)}{1-\gamma} + \qfunc(s,a) + h(\gamma,s,a)
\end{equation}
with $\lim_{\gamma\to 1}h(\gamma, s,a)=0$. This allows us to rewrite the discounted advantage function as
\begin{align*}
    \advd(s,a) &= \qfuncd(s,a) - \vfuncd(s) \\
    &= \qfunc(s,a) + \frac{\rho(\pi)}{1-\gamma} + h^{\pi}(s,a,\gamma) - \vfunc(s)-\frac{\rho(\pi)}{1-\gamma}-g^{\pi}(s,\gamma) \\
    &= \adv(s,a) + h^{\pi}(s,a,\gamma) -g^{\pi}(s,\gamma)
\end{align*}
Since $h^{\pi}(s,a,\gamma)$ and $g^{\pi}(s,\gamma)$ both approach 0 as $\gamma$ approaches 1, taking the limit for $\gamma\to 1$ gives us Equation \eqref{eq:disc_adv}.
\end{proof}

\section{Proofs}\label{append:proof}

\subsection{Proof of Proposition \ref{prop:uninform_bound}}
\uninformbound*
\begin{proof}
We will give a proof for the case of Corollary 1 in \citet{achiam2017constrained}, a similar argument can be applied to the bound in Theorem 1 of \citet{schulman2015trust}.

We first state Corollary 1 of \citet{achiam2017constrained} which says that for any two stationary policies $\pi$ and $\pi'$:
\begin{equation}\label{eq:pol_imp_disc}
    \rho_{\gamma}(\pi')-\rho_{\gamma}(\pi) \geq \frac{1}{1-\gamma}\left[\E_{\substack{s\sim \dpid \\ a\sim\pi'}}[\advd(s,a)] - \frac{2\gamma\epsilon^{\gamma}}{1-\gamma}\E_{s\sim\dpid}\TV{\pi'}{\pi}\right]
\end{equation}
where $\epsilon^{\gamma} = \max_s\left|\E_{a\sim\pi'}[\advd(s,a)]\right|$.
Since $\dpid$ approaches the stationary distribution $\dpi$ as $\gamma\to 1$, we can multiply the right hand side of \eqref{eq:pol_imp_disc} by $(1-\gamma)$ and take the limit which gives us:
\begin{align*}
    &\lim_{\gamma\to 1}\left(\E_{\substack{s\sim \dpid \\ a\sim\pi'}}[\advd(s,a)] \pm \frac{2\gamma\epsilon^{\gamma}}{1-\gamma}\E_{s\sim\dpid}\TV{\pi'}{\pi}\right) \\
    =& \E_{\substack{s\sim \dpi \\ a\sim\pi'}}[\adv(s,a)] - 2\epsilon \E_{s\sim\dpi}[\TV{\pi'}{\pi}]\lim_{\gamma\to 1}\frac{\gamma}{1-\gamma} \\
    =& -\infty
\end{align*}
Here $\epsilon = \max_s\left|\E_{a\sim\pi'}[\adv(s,a)]\right|$. The first equality is a direct result of Corollary \ref{corollary:discount_relations}.
\end{proof}

\subsection{Proof of Lemma \ref{lemma:policy_diff}}
\policydiff*
\begin{proof}
We give two approaches for this proof. In the first approach, we directly expand the right-hand side using the definition of the advantage function and  Bellman equation, which gives us:
\begin{align*}
    \E_{\substack{s\sim d^{\pi'}\\ a\sim\pi'}}\left[\adv(s,a)\right] 
    &= \E_{\substack{s\sim d^{\pi'}\\ a\sim\pi'}}\left[\qfunc(s,a) - \vfunc(s)\right] \\
    &= \E_{\substack{s\sim d^{\pi'}\\ a\sim\pi'}}\left[r(s,a) - \rho(\pi) + \E_{s'\sim P(\cdot|s,a)}\left[\vfunc(s')\right] - \vfunc(s)\right] \\
    &= \rho(\pi') - \rho(\pi) + \E_{\substack{s\sim d^{\pi'}\\ a\sim\pi'\\ s'\sim P(\cdot|s,a)}}[\vfunc(s')] - \E_{s\sim d^{\pi'}}[\vfunc(s)]
\end{align*}
Since $\dpip(s)$ is the stationary distribution:
\begin{align*}
    \E_{\substack{s\sim d^{\pi'}\\ a\sim\pi'\\ s'\sim P(\cdot|s,a)}}[\vfunc(s')] = \sum_{s}\dpip(s)\sum_{a}\pi'(a|s)\sum_{s'}P(s'|s,a)\vfunc(s') = \sum_{s}\dpip(s)\sum_{s'}P_{\pi'}(s'|s)\vfunc(s') = \sum_{s'}\dpip(s')\vfunc(s')
\end{align*}
Therefore,
\begin{equation*}
    \E_{\substack{s\sim d^{\pi'}\\ a\sim\pi'\\ s'\sim P(\cdot|s,a)}}[\vfunc(s')] - \E_{s\sim d^{\pi'}}[\vfunc(s)] = 0
\end{equation*}
which gives us the desired result.

Alternatively, we can directly apply Proposition \ref{prop:discount_relations} and Corollary \ref{corollary:discount_relations} to Lemma 6.1 of \citet{kakade2002approximately} and take the limit as $\gamma\to 1$.
\end{proof}

\subsection{Proof of Lemma \ref{lemma:policy_impd}}

\policyimpd*

\begin{proof}
\begin{align*}
    \left|\rho(\pi') - \rho(\pi) - \E_{\substack{s\sim \dpi\\ a\sim\pi'}}\left[\adv(s,a)\right]\right| &= \left|\E_{\substack{s\sim d^{\pi'}\\ a\sim\pi'}}\left[\adv(s,a)\right] - \E_{\substack{s\sim \dpi\\ a\sim\pi'}}\left[\adv(s,a)\right]\right| \\
    &=\left|\sum_s \E_{a\sim\pi'}\left[\adv(s,a)\right]\left(\dpip(s) - \dpi(s)\right)\right| \\
    &\leq \sum_s\left|\E_{a\sim\pi'}\left[\adv(s,a)\right]\left(\dpip(s) - \dpi(s)\right)\right| \\
    &\leq \max_s\left|\E_{a\sim\pi'}\left[\adv(s,a)\right]\right|\norm{\dpip - \dpi}_1 \\
    &= 2\epsilon\TV{d^{\pi'}}{d^{\pi}}
\end{align*}
where the last inequality follows from H\"older's inequality.
\end{proof}

\subsection{Proof of Lemma \ref{lemma:d_and_pi}}
\dandpi*
\begin{proof}
Our proof is based on Markov chain perturbation theory \citep{cho2001comparison,hunter2005stationary}. Note first that
\begin{equation}\label{eq:dpi_reduce}
    \begin{aligned}
        (d_{\pi'}^T-d_{\pi}^T)(I-P_{\pi'}+P_{\pi'}^{\star}) &= d_{\pi'}^T-d_{\pi}^T - d_{\pi'}^T + d_{\pi}^T P_{\pi'} \\
    &= d_{\pi}^T P_{\pi'} - d_{\pi}^T \\
    &= d_{\pi}^T(P_{\pi'}-P_{\pi})
    \end{aligned}
\end{equation}
Right multiplying \eqref{eq:dpi_reduce} by $(I-P_{\pi'}+P_{\pi'}^{\star})^{-1}$ gives us:
\begin{equation}\label{eq:dinZ}
    d_{\pi'}^T-d_{\pi}^T = d_{\pi}^T(P_{\pi'}-P_{\pi})(I-P_{\pi'}+P_{\pi'}^{\star})^{-1}
\end{equation}
Recall that $\Zpip =( I-P_{\pi'}+P_{\pi'}^{\star})^{-1}$ and $\Mpip = (I-\Zpip + E \Zpip_{\dg})D^{\pi'}$. Rearranging the terms we find that
\begin{equation}\label{eq:ZinM}
    \Zpip = I + E \Zpip_{\dg} - \Mpip (D^{\pi'})^{-1}
\end{equation}
Plugging \eqref{eq:ZinM} into \eqref{eq:dinZ} gives us
\begin{equation}
    \begin{aligned}
    d_{\pi'}^T-d_{\pi}^T &= d_{\pi}^T(P_{\pi'}-P_{\pi})(I + E \Zpip_{\dg} - \Mpip (D^{\pi'})^{-1}) \\
    & = d_{\pi}^T(P_{\pi'}-P_{\pi})(I - \Mpip (D^{\pi'})^{-1})
    \end{aligned}
\end{equation}
where the last equality is due to $(P_{\pi'}-P_{\pi})E=0$.

Let $\norm{\cdot}_p$ denote the the operator norm of a matrix, in particular $\norm{\cdot}_1$ and $\norm{\cdot}_{\infty}$ are the maximum absolute column sum and maximum absolute row sum respectively. By the submultiplicative property of operator norms \citep{horn2012matrix}, we have:
\begin{equation}\label{eq:submult}
    \begin{aligned}
    \norm{d_{\pi'}-d_{\pi}}_1 &= \norm{(I - \Mpip (D^{\pi'})^{-1})^T (P_{\pi'}^T-P_{\pi}^T)\dpi}_1 \\
    &\leq \norm{(I - \Mpip (D^{\pi'})^{-1})^T}_1 \norm{(P_{\pi'}^T-P_{\pi}^T)\dpi}_1 \\
    &= \norm{(I - \Mpip (D^{\pi'})^{-1})}_{\infty}\norm{(P_{\pi'}^T-P_{\pi}^T)\dpi}_1
    \end{aligned}
\end{equation}
We can rewrite $\norm{I - \Mpip (D^{\pi'})^{-1}}_{\infty}$ as
\begin{equation}
    \begin{aligned}
    \norm{I - \Mpip (D^{\pi'})^{-1}}_{\infty} &= \max_{s} \left( \sum_{s'}\Mpip(s,s')\dpip(s') - 1\right) \\
    &= \kappa^{\pi'} - 1
    \end{aligned}
\end{equation}
Finally we bound $\norm{(P_{\pi'}^T-P_{\pi}^T)\dpi}_1$ by
\begin{equation}\label{eq:Pd}
    \begin{aligned}
    \norm{(P_{\pi'}^T-P_{\pi}^T)\dpi}_1 &= \sum_{s'}\left|\sum_s\left(\sum_a P(s'|s,a)\pi'(a|s)-P(s'|s,a)\pi(a|s)\right)d_{\pi}(s)\right| \\
    &\leq \sum_{s',s}\left|\sum_a P(s'|s,a)(\pi'(a|s)-\pi(a|s))\right|d_{\pi}(s) \\
    &\leq \sum_{s,s',a}P(s'|s,a)\left|\pi'(a|s)-\pi(a|s)\right|d_{\pi}(s) \\
    &\leq \sum_{s,a}\left|\pi'(a|s)-\pi(a|s)\right|d_{\pi}(s) \\
    &= 2\E_{s\sim d^{\pi}}[\TV{\pi'}{\pi}]
    \end{aligned}
\end{equation}
Plugging back into \eqref{eq:submult} and setting $\kappa^{\star}=\max_{\pi}\kappa^{\pi}$ gives the desired result.
\end{proof}

\section{Kemeny's Constant and Mixing Time}\label{append:mixing}
\begin{proposition}
Under Assumption \ref{assump:irreducible}, let $1=\lambda_1(\pi)>\lambda_2(\pi)\geq \dots\geq\lambda_{|\cS|}(\pi)> -1$ be the eigenvalues of $P_{\pi}$, we have
\begin{equation}
 \kappa^{\pi} \leq 1 + \frac{|\cS| - 1}{1 - \lambda^{\star}(\pi)}
\end{equation}
where $\lambda^{\star}(\pi) = \max_{i=2,\dots,|\cS|}|\lambda_i(\pi)|$.
\end{proposition}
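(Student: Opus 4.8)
The plan is to compute Kemeny's constant exactly in terms of the eigenvalues of $P_\pi$ and then bound the resulting sum termwise. Recall from the main text that $\kappa^\pi = \Tr(\Zpi)$, where $\Zpi = (I - P_\pi + P^\star_\pi)^{-1}$ and $P^\star_\pi = \1\dpi^T$. Since the trace equals the sum of the eigenvalues counted with algebraic multiplicity, it suffices to identify the spectrum of $\Zpi$, which in turn reduces to identifying the spectrum of $I - P_\pi + P^\star_\pi$ in relation to that of $P_\pi$.

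The key step is this eigenvalue computation. First I would exploit the splitting $\R^{|\cS|} = \mathrm{span}(\1)\oplus W$, where $W := \{v : \dpi^T v = 0\}$; this is a genuine direct sum because $\dpi^T\1 = 1 \neq 0$. Both summands are invariant under $P_\pi$: clearly $P_\pi\1 = \1$, and if $\dpi^T v = 0$ then $\dpi^T(P_\pi v) = (\dpi^T P_\pi)v = \dpi^T v = 0$, since $\dpi^T$ is the stationary left eigenvector. On $\mathrm{span}(\1)$ we have $(I - P_\pi + P^\star_\pi)\1 = P^\star_\pi\1 = \1$, giving eigenvalue $1$. On $W$ the projection vanishes, $P^\star_\pi v = \1(\dpi^T v) = 0$, so the operator restricts to $I - P_\pi$; the characteristic polynomial of $P_\pi$ restricted to $W$ is $\prod_{i=2}^{|\cS|}(\lambda - \lambda_i(\pi))$, so the eigenvalues of $I - P_\pi$ on $W$ are exactly $1 - \lambda_i(\pi)$ for $i = 2,\dots,|\cS|$. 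Consequently the eigenvalues of $\Zpi$ are $1$ together with $1/(1 - \lambda_i(\pi))$ for $i \geq 2$.

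Taking the trace then yields the exact identity $\kappa^\pi = 1 + \sum_{i=2}^{|\cS|} \frac{1}{1 - \lambda_i(\pi)}$. To finish, I would bound each summand: since the $\lambda_i(\pi)$ are real and lie in $(-1,1)$ for $i\geq 2$, we have $1 - \lambda_i(\pi) \geq 1 - |\lambda_i(\pi)| \geq 1 - \lambda^\star(\pi) > 0$, hence $1/(1-\lambda_i(\pi)) \leq 1/(1 - \lambda^\star(\pi))$. Summing the $|\cS| - 1$ terms indexed by $i=2,\dots,|\cS|$ gives the claimed bound.

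The main obstacle is making the eigenvalue identity fully rigorous without assuming that $P_\pi$ is diagonalizable (the statement allows repeated eigenvalues among $\lambda_2,\dots,\lambda_{|\cS|}$). The clean way around this is precisely the invariant-subspace argument above, which computes the trace from the restriction of the operator to each $P_\pi$-invariant summand and therefore never requires a basis of eigenvectors. Everything after the trace identity is then a routine termwise inequality.
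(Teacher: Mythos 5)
Your proposal is correct, and it follows the same skeleton as the paper's proof: identify the spectrum of the fundamental matrix $\Zpi=(I-P_{\pi}+P_{\pi}^{\star})^{-1}$, use the exact identity $\kappa^{\pi}=\Tr(\Zpi)=1+\sum_{i=2}^{|\cS|}\frac{1}{1-\lambda_i(\pi)}$, and bound the sum termwise. Where you genuinely differ is in how the spectral identity is justified. The paper takes an eigenvector $u$ of $P_{\pi}$ with eigenvalue $\lambda$ and computes $(I-P_{\pi}+P_{\pi}^{\star})u=\left(1-\lambda+\lim_{n\to\infty}\lambda^{n}\right)u$, using $P_{\pi}^{\star}u=\lim_{n\to\infty}P_{\pi}^{n}u$ and aperiodicity to rule out $\lambda=-1$; this produces one eigenvalue of $I-P_{\pi}+P_{\pi}^{\star}$ per eigenvector of $P_{\pi}$, but it only accounts for \emph{all} eigenvalues with their algebraic multiplicities (which is what the trace formula requires) if $P_{\pi}$ admits a full eigenbasis --- a point the paper leaves implicit. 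Your invariant-subspace argument, splitting $\R^{|\cS|}=\mathrm{span}(\1)\oplus\{v:\dpi^{T}v=0\}$ and reading off the characteristic polynomial of the restriction of the operator to each summand, closes exactly that gap: it delivers the eigenvalues of $I-P_{\pi}+P_{\pi}^{\star}$ with correct algebraic multiplicity and no diagonalizability assumption, which is precisely the obstacle you flagged. The trade-off is brevity versus rigor: the paper's eigenvector computation is shorter and makes visible where aperiodicity enters, while yours is the airtight version of the same idea; the concluding estimate $1/(1-\lambda_i(\pi))\leq 1/(1-\lambda^{\star}(\pi))$ summed over the $|\cS|-1$ indices is identical in both.
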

\begin{proof}
For brevity, we omit $\pi$ from the notations in our proof.
Let $\lambda$ be an eigenvalue of $P$ and $u$ its corresponding eigenvector. Since $P$ is aperiodic, $\lambda\neq -1$, we then have
\begin{equation}
    \begin{aligned}
      (I-P + P^{\star})u &= u - Pu + \lim_{n\to\infty}P^n u \\
      &= (1-\lambda)u + u\lim_{n\to\infty}\lambda^n \\
      & = \left(1-\lambda + \lim_{n\to\infty}\lambda^n\right) u
    \end{aligned}
\end{equation}
where $\lim_{n\to\infty}\lambda^n=1$ when $\lambda=1$ and $0$ when $|\lambda|<1$. Therefore, $(I-P + P^{\star})$ has eigenvalues $1, 1-\lambda_2,\dots, 1-\lambda_{|\cS|}$. The fundamental matrix $Z=(I-P + P^{\star})^{-1}$ has eigenvalues $1, \frac{1}{1 - \lambda_2}, \cdots, \frac{1}{1 - \lambda_{|\cS|}}$.
We can then upper bound Kemeny's constant by
\begin{equation}
    \kappa = \Tr(Z) 
    = 1 + \sum_{i=2}^{|\cS|} \frac{1}{1 - \lambda_i} 
    \leq 1 + \sum_{i=2}^{|\cS|} \frac{1}{1 - |\lambda_i|} 
    \leq 1+  \frac{|\cS| - 1}{1 - \lambda^{\star}}
\end{equation}
\end{proof}
The expression $\lambda^{\star}(\pi)$ is called as the \emph{Second Largest Eigenvalue Modulo (SLEM)}. The Perron-Frobenius theorem says that the transition matrix $P_{\pi}$ converges to the limiting distribution $P^{\star}_{\pi}$ at an exponential rate, and the rate of convergence is determined by the SLEM (see Theorem 4.3.8 of \citet{bremaud2020markov} for more details). In fact, it turns out that the mixing time of a Markov chain is directly related to the SLEM where Markov chains with larger SLEM takes longer to mix and vice versa \cite{levin2017markov}.

\section{Average Reward Policy Improvement Bound for Aperiodic Unichain MDPs}\label{append:unichain_aperiodic}

In this section, we consider general aperiodic unichain MDPs, i.e. MPDs which satisfy Assumption \ref{assump:unichain}.

We note that Lemma \ref{lemma:policy_diff} and Lemma \ref{lemma:policy_impd} both hold under Assumption \ref{assump:unichain}. We can then show the following under the general aperiodic unichain case:

\begin{lemma}\label{lemma:unichain_d_pi}
For any aperiodic unichain MDP:
\begin{equation}
    \TV{\dpip}{\dpi} \leq \zeta^{\star} \E_{s\sim \dpi}[\TV{\pi'}{\pi}[s]]
\end{equation}
where $\zeta^{\star}=\max_{\pi}\norm{Z^{\pi}}_{\infty}$.
\end{lemma}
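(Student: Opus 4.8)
The plan is to follow the proof of Lemma~\ref{lemma:d_and_pi} up to the perturbation identity \eqref{eq:dinZ}, and then to diverge precisely at the point where the ergodic argument invokes the mean first passage time representation. First I would check that \eqref{eq:dinZ}, namely
\begin{equation*}
d_{\pi'}^T - d_{\pi}^T = d_{\pi}^T(P_{\pi'} - P_{\pi})(I - P_{\pi'} + P_{\pi'}^{\star})^{-1},
\end{equation*}
remains valid under the weaker Assumption~\ref{assump:unichain}. The derivation of \eqref{eq:dpi_reduce} uses only that $\dpi$ and $\dpip$ are the unique stationary distributions and that $P_{\pi'}^{\star} = \1 d_{\pi'}^T$ (so that $(d_{\pi'}^T-d_{\pi}^T)P_{\pi'}^{\star}=0$), both of which hold for aperiodic unichain chains. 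The only additional ingredient is the invertibility of $I - P_{\pi'} + P_{\pi'}^{\star}$, i.e.\ the existence of the fundamental matrix $\Zpip$, which is a standard property of unichain Markov chains.

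The key departure from the ergodic case is that I would \emph{not} substitute the representation $\Zpip = I + E\Zpip_{\dg} - \Mpip (D^{\pi'})^{-1}$. With transient states present, entries of $\Mpip$ can be infinite (a transient state need never be reached), so the identity $\norm{I - \Mpip(D^{\pi'})^{-1}}_{\infty} = \kappa^{\pi'} - 1$ used in Lemma~\ref{lemma:d_and_pi} is unavailable. Instead I would keep $\Zpip$ intact. Transposing \eqref{eq:dinZ} gives $d_{\pi'} - d_{\pi} = (\Zpip)^T (P_{\pi'}^T - P_{\pi}^T)\dpi$, and submultiplicativity of operator norms yields
\begin{equation*}
\norm{d_{\pi'} - d_{\pi}}_1 \leq \norm{(\Zpip)^T}_1 \, \norm{(P_{\pi'}^T - P_{\pi}^T)\dpi}_1 = \norm{\Zpip}_{\infty}\, \norm{(P_{\pi'}^T - P_{\pi}^T)\dpi}_1,
\end{equation*}
using $\norm{(\Zpip)^T}_1 = \norm{\Zpip}_{\infty}$ (the maximum absolute column sum of the transpose equals the maximum absolute row sum of the matrix), exactly as in \eqref{eq:submult}.

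Finally I would reuse the chain of inequalities \eqref{eq:Pd} verbatim: the bound $\norm{(P_{\pi'}^T - P_{\pi}^T)\dpi}_1 \leq 2\E_{s\sim\dpi}[\TV{\pi'}{\pi}[s]]$ uses only that each $P(\cdot|s,a)$ is a probability distribution and makes no use of ergodicity. Combining this with $\TV{\dpip}{\dpi} = \tfrac12\norm{\dpip - \dpi}_1$ and bounding $\norm{\Zpip}_{\infty} \leq \zeta^{\star} = \max_{\pi}\norm{Z^{\pi}}_{\infty}$ gives the claim. The main obstacle is conceptual rather than computational: one must confirm that the perturbation identity and the fundamental matrix survive the passage from the ergodic to the aperiodic unichain setting, and recognize that the Kemeny-constant route of Lemma~\ref{lemma:d_and_pi} must be replaced by a direct operator-norm bound on $\Zpip$ precisely because mean first passage times to transient states can diverge.
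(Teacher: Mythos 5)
Your proposal is correct and follows essentially the same route as the paper's own proof: keep the fundamental matrix $\Zpip$ intact in the perturbation identity \eqref{eq:dinZ}, apply submultiplicativity of operator norms, and reuse the bound \eqref{eq:Pd}, yielding the constant $\zeta^{\star}=\max_{\pi}\norm{Z^{\pi}}_{\infty}$. Your added justification for why \eqref{eq:dinZ} survives the unichain setting and why the mean-first-passage-time substitution must be abandoned (entries of $\Mpip$ can be infinite for transient states) is exactly the implicit reasoning behind the paper's argument, just spelled out more explicitly.
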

\begin{proof}
Note that
\begin{equation*}
    d_{\pi'}^T-d_{\pi}^T = d_{\pi}^T(P_{\pi'}-P_{\pi})(I-P_{\pi'}+P_{\pi'}^{\star})^{-1}
\end{equation*}
from Equation \ref{eq:dinZ} still holds in the general aperiodic unichain case. By the submultiplicative property, we have:
\begin{equation}
    \begin{aligned}
    \norm{d_{\pi'}-d_{\pi}}_1 &= \norm{((I-P_{\pi'}+P_{\pi'}^{\star})^{-1})^T (P_{\pi'}^T-P_{\pi}^T)\dpi}_1 \\
    &\leq \norm{((I-P_{\pi'}+P_{\pi'}^{\star})^{-1})^T}_1 \norm{(P_{\pi'}^T-P_{\pi}^T)\dpi}_1 \\
    &= \norm{(I-P_{\pi'}+P_{\pi'}^{\star})^{-1}}_{\infty}\norm{(P_{\pi'}^T-P_{\pi}^T)\dpi}_1
    \end{aligned}
\end{equation}
Using the same argument as \eqref{eq:Pd} to bound $\norm{(P_{\pi'}^T-P_{\pi}^T)\dpi}_1$ and setting $\zeta^{\star}=\max_{\pi}\norm{Z^{\pi}}_{\infty}$ gives the desired result.
\end{proof}
Combining Lemma \ref{lemma:policy_impd} and Lemma \ref{lemma:unichain_d_pi} gives us the following result:
\begin{theorem}\label{thm:pol_imp_unichain}
For any aperiodic unichain MDP, the following bounds hold for any two stochastic policies $\pi$ and $\pi'$:
\begin{align}
    \rho(\pi') - \rho(\pi) &\leq \E_{\substack{s\sim \dpi\\ a\sim\pi'}}\left[\adv(s,a)\right] + 2\tilde{\xi}\E_{s\sim \dpi}[\TV{\pi'}{\pi}[s]] \label{eq:pol_imp_upper} \\
    \rho(\pi') - \rho(\pi) &\geq \E_{\substack{s\sim \dpi\\ a\sim\pi'}}\left[\adv(s,a)\right] - 2\tilde{\xi}\E_{s\sim \dpi}[\TV{\pi'}{\pi}[s]] 
\end{align}
where $\tilde{\xi} = \zeta^{\star}\max_{s}\E_{a\sim\pi'}|\adv(s,a)|$.
\end{theorem}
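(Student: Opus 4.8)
The plan is to chain the two preceding lemmas together exactly as Theorem \ref{thm:AvgR_policy_imp} was obtained from Lemma \ref{lemma:policy_impd} and Lemma \ref{lemma:d_and_pi} in the ergodic case. Both Lemma \ref{lemma:policy_diff} and Lemma \ref{lemma:policy_impd} remain valid under Assumption \ref{assump:unichain}, so I would start from the absolute-value bound of Lemma \ref{lemma:policy_impd} and read it as the pair of one-sided inequalities
\[
-2\epsilon\,\TV{\dpip}{\dpi} \;\le\; \rho(\pi') - \rho(\pi) - \E_{\substack{s\sim\dpi\\a\sim\pi'}}[\adv(s,a)] \;\le\; 2\epsilon\,\TV{\dpip}{\dpi},
\]
where $\epsilon = \max_s|\E_{a\sim\pi'}[\adv(s,a)]|$.

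Next I would invoke Lemma \ref{lemma:unichain_d_pi} to replace the stationary-distribution divergence by the average policy divergence, substituting $\TV{\dpip}{\dpi} \le \zeta^\star\,\E_{s\sim\dpi}[\TV{\pi'}{\pi}[s]]$ into both one-sided inequalities.

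The only genuinely new estimate beyond the two cited lemmas is converting the constant $\epsilon$ into the $\tilde{\xi}$ appearing in the statement. Here I would apply the triangle inequality (equivalently Jensen's inequality for $x\mapsto|x|$) pointwise in $s$ to obtain $|\E_{a\sim\pi'}[\adv(s,a)]| \le \E_{a\sim\pi'}|\adv(s,a)|$, hence $\epsilon \le \max_s\E_{a\sim\pi'}|\adv(s,a)|$. Since $\zeta^\star\E_{s\sim\dpi}[\TV{\pi'}{\pi}[s]]$ is nonnegative, multiplying through gives $2\epsilon\,\TV{\dpip}{\dpi} \le 2\tilde{\xi}\,\E_{s\sim\dpi}[\TV{\pi'}{\pi}[s]]$ with $\tilde{\xi} = \zeta^\star\max_s\E_{a\sim\pi'}|\adv(s,a)|$; feeding this bound into the two sides yields \eqref{eq:pol_imp_upper} and the lower bound directly.

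I do not expect any real obstacle, since all the analytical work was already done in establishing Lemma \ref{lemma:policy_impd} (H\"older's inequality) and Lemma \ref{lemma:unichain_d_pi} (Markov-chain perturbation theory). The one point requiring care is that the estimate $\epsilon \le \tilde{\xi}/\zeta^\star$ must \emph{loosen} rather than tighten the bound, so that enlarging the symmetric error term preserves both the upper and the lower inequality at once; this is automatic because $|\cdot|$ commutes with the symmetric $\pm$ structure of the original absolute-value bound.
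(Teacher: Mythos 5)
Your proposal is correct and follows essentially the same route as the paper: Theorem \ref{thm:pol_imp_unichain} is obtained there precisely by combining Lemma \ref{lemma:policy_impd} (which holds under Assumption \ref{assump:unichain}) with Lemma \ref{lemma:unichain_d_pi}, exactly as you describe. Your explicit treatment of the passage from the lemma's constant $\epsilon = \max_s\left|\E_{a\sim\pi'}[\adv(s,a)]\right|$ to $\max_s\E_{a\sim\pi'}\left|\adv(s,a)\right|$ via Jensen's inequality, noting that this only enlarges the symmetric error term and hence preserves both one-sided bounds, is the correct handling of a step the paper leaves implicit.
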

The constant $\zeta^{\star}$ is always finite therefore we can similarly apply the approximate policy iteration procedure from Algorithm \ref{alg:policy_iteration} to generate a sequence of monotonically improving policies.

\section{Derivation of ATRPO}\label{append:atrpo}

In this section, we give the derivation and additional details of the ATRPO algorithm presented in Algorithm \ref{alg:atrpo}. The algorithm is similar to TRPO in the discount case but with several notable distinctions. Recall the trust region optimization problem from \eqref{eq:atrpo} where
\begin{equation*}
\begin{aligned}
  & \underset{\pol\in\Pi_{\theta}}{\text{maximize}} \quad
\E_{\substack{s\sim\dpolk\\ a\sim\pol}}[\wb{A}^{\polk}(s,a)] \\
& \text{subject to} \quad \avKL{\pol}{\polk} \leq \delta 
\end{aligned}
\end{equation*}
We once again note that the objective above is the expectation of the {\em average-reward} advantage function and not the standard discounted advantage function. As done for the derivation for discounted TRPO, we can approximate this problem by performing first-order Taylor approximation on the objective and second-order approximation on the KL constraint\footnote{The gradient and first-order Taylor approximation of $\avKL{\pol}{\polk}$ at $\theta=\theta_k$ is zero.} around $\theta_k$ which gives us: 
\begin{equation}\label{eq:atrpo_approx}
\begin{aligned}
\underset{\theta}{\text{maximize}} \quad 
&g^T(\theta-\theta_k)  \\
\text{subject to} \quad
&\frac{1}{2}(\theta-\theta_k)^T H (\theta-\theta_k)\leq\delta
\end{aligned}
\end{equation}
where
\begin{equation}
    g := \E_{\substack{s\sim\dpolk \\ a\sim\polk}}\left[\grad\log\pol(a|s)|_{\theta=\theta_k}\wb{A}^{\polk}(s,a)\right]
\end{equation}
and
\begin{equation}
    H := \E_{\substack{s\sim\dpolk \\ a\sim\polk}}\left[\grad\log\pol(a|s)|_{\theta=\theta_k}\grad\log\pol(a|s)|_{\theta=\theta_k}^T\right]
\end{equation}
Note that this approximation is good provided that the step-size $\delta$ is small. The term $g$ is the average reward policy gradient at $\theta=\theta_k$ with an additional baseline term \citep{sutton2000policy} and $H$ is the \emph{Fisher Information Matrix} (FIM) \citep{lehmann2006theory}. The FIM is a symmetrical matrix and always positive semi-definite. If we assume $H$ is always positive definite, we can solve \eqref{eq:atrpo_approx} analytically with a Lagrange duality argument which yields the solution:
\begin{equation}\label{eq:atrpo_update}
    \theta = \theta_k + \sqrt{\frac{2\delta}{g^T H^{-1}g}}H^{-1}g
\end{equation}
The update rule in \eqref{eq:atrpo_update} has the same form as that of natural policy gradients \citep{kakade2001natural} for the average reward case. Similar to discounted TRPO, both $g$ and $H$ can be approximated using samples drawn from the policy $\polk$. The FIM $H$ here is identical to the FIM $H$ for Natural Gradient and TRPO. However, the definition of $g$ is different from the definition of $g$ for discounted TRPO since it includes the average-reward advantage function. 

Thus, in order to estimate $g$ we need to estimate 
\begin{equation} \label{eq:average_expression}
    \wb{A}^{\polk}(s,a) =   \wb{Q}^{\polk}(s,a) -  \wb{V}^{\polk}(s)
\end{equation}
This can be done in various ways. One approach is to approximate the average-reward bias $\wb{V}^{\polk}(s)$ and then use a one-step TD backup (as was done in Algorithm \ref{alg:atrpo}) to estimate the action-bias function. Concretely, combining (\ref{eq:average_expression}) and the Bellman equation in \eqref{eq:bellman_qv} gives
    \begin{equation}\label{eq:bellman_objective}
        \wb{A}^{\polk}(s,a) =  r(s,a) - \rho(\polk) + \E_{s'\sim P(\cdot|s,a)}\left[\wb{V}^{\polk}(s')\right] - \wb{V}^{\polk}(s)
    \end{equation}
This expression involves the average-reward bias $\wb{V}^{\polk}(s)$, which we can approximate using a critic network $\wb{V}_{\phi_k}(s)$, giving line 7 in Algorithm \ref{alg:atrpo}. It remains to specify what the target should be for updating the critic parameter $\phi$. For this, we can similarly make use of the 
Bellman equation for the average-reward bias in Equation \eqref{eq:bellman_vv} which gives line 6 in Algorithm \ref{alg:atrpo}. Finally, like discounted TRPO, after applying the update term \eqref{eq:atrpo_update}, we use backtracking linesearch to find an update term which has a positive advantage value and also maintains KL constraint satisfaction. We also apply the conjugate gradient method to estimate $H^{-1}$.

\section{Reinforcement Learning with Average Cost Constraints}\label{append:acpo}

\subsection{The Constrained RL Problem}
In addition to learning to improve its long-term performance, many real-world applications of RL also require the agent to satisfy certain safety constraints. A mathematically principled framework for incorporating safety constraints into RL is using Constraint Markov Decision Processes (CMDP). A CMDP \citep{kallenberg1983linear,ross1985constrained,altman1999constrained} is an MDP equipped with a constraint set $\Pi_{c}$, a CMDP problems finds a policy $\pi$ that maximizes an agent's long-run reward given that $\pi\in\Pi_c$ . We consider two forms of constraint sets: the average cost constraint set $\{\pi\in\Pi:\rho_{c}(\pi)\leq b\}$ and the discounted cost constraint set $\{\pi\in\Pi:\rho_{c,\gamma}(\pi)\leq b\}$. Here $b$ is some given constraint bound, the cost constraint functions are given by
\begin{align}
    &\rho_c(\pi) := \lim_{N\to\infty}\frac{1}{N}\E_{\tau\sim\pi}\left[\sum_{t=0}^{N-1} c(s_t,a_t)\right] \\
    &\rho_{c,\gamma}(\pi) :=  \E_{\tau\sim\pi}\left[\sum_{t=0}^{\infty}\gamma^t c(s_t,a_t)\right]
\end{align}
for some bounded cost function $c:\cS\times\cA\to[c_{\min}, c_{\max}]$.
\subsection{Constrained RL via Local Policy Update}
Directly adding cost constraints to any iterative policy improvement algorithms can be sample inefficient since the cost constraint needs to be evaluated using samples from the new policy after every policy update. Instead, \citet{achiam2017constrained} proposed updating $\polk$ via the following optimization problem:
\begin{equation}\label{eq:cpo}
\begin{aligned}
 \underset{\pol\in\Pi_{\theta}}{\text{maximize}} \quad
&\E_{\substack{s\sim\dpolkd\\ a\sim\pol}}[A_{\gamma}^{\polk}(s,a)] \\
 \text{subject to} \quad & \Tilde{\rho}_{c,\gamma}(\pol) \leq b, \\ & \avKL{\pol}{\polk} \leq \delta .
\end{aligned}
\end{equation}

Here,
\begin{equation}\label{eq:disc_cost_surr}
    \Tilde{\rho}_{c,\gamma}(\pol):=  \rho_{c,\gamma}(\polk) + \frac{1}{1-\gamma}\E_{s\sim\dpolkd,a\sim\pol}\left[A_{c,\gamma}^{\polk}(s,a)\right]
\end{equation}
is a surrogate cost function used to approximate the cost constraint and $A_{c,\gamma}^{\polk}(s,a)$ is the discounted cost advantage function where we replace the reward with the cost\footnote{We can define the discounted value/action-value cost functions and the average cost bias/action-bias functions in a similar manner}. Note that \eqref{eq:disc_cost_surr} can be evaluated using samples from $\polk$. By Corollary 2 of \citet{achiam2017constrained} and \eqref{eq:tv-kl}:
\begin{equation}\label{eq:disc_surr_bound}
    \left|\rho_{c,\gamma}(\pol) - \Tilde{\rho}_{c,\gamma}(\pol)\right|\leq \frac{\gamma\epsilon_{c,\gamma}}{(1-\gamma)^2}\sqrt{2\avKL{\pol}{\polk}}
\end{equation}
where $\epsilon_{c,\gamma}=\max_s\left|\E_{a\sim\pi'}[\adv_{c,\gamma}(s,a)]\right|$ . This shows that the surrogate cost is a good approximation to $\rho_{c,\gamma}(\pol)$ when $\pol$ and $\polk$ are close w.r.t. the KL divergence. Using \eqref{eq:disc_surr_bound} and the trust region constraint, the worst-case constraint violation for when $\polkup$ is the solution to \eqref{eq:cpo} can be upper bounded (Proposition 2 of \citet{achiam2017constrained}.)

This framework is problematic when the cost constraint is undiscounted. Define the average surrogate cost as
\begin{equation}
    \Tilde{\rho}_c(\pol) := \rho_c(\polk) + \E_{\substack{s\sim\dpolk \\ a\sim\polk}}[A_c^{\pol}(s,a)]
\end{equation}
where $A_c^{\pol}(s,a)$ is the average cost advantage function. We can easily show that
\begin{equation*}
    \lim_{\gamma\to 1}(1-\gamma)(\rho_{c,\gamma}(\pol) - \Tilde{\rho}_{c,\gamma}(\pol)) = \rho_c(\pol) -  \Tilde{\rho}_c(\pol)\quad\text{and}\quad\lim_{\gamma\to 1}\frac{\gamma\epsilon_{c,\gamma}}{1-\gamma}\sqrt{2\avKL{\pol}{\polk}}=\infty
\end{equation*}
However, by Theorem \ref{thm:AvgR_policy_imp}\footnote{It is straightforward to show that the theorem still holds when we replace the reward with the cost.} and \eqref{eq:tv-kl}:
\begin{equation}
    \left|\rho_{c}(\pol) - \Tilde{\rho}_{c}(\pol)\right|\leq \xi^{\pol}_c\sqrt{2\avKL{\pol}{\polk}}
\end{equation}
where $\xi^{\pol}_c=(\kappa^{\star}-1)\max_{s}\E_{a\sim\pol}|A_c^{\polk}(s,a)|$. We then have the following result:

\begin{proposition}\label{prop:acpo_bound}
Suppose $\pol$ and $\polk$ satisfy the constraints $\Tilde{\rho}_{c}(\pol)<b$ and $\avKL{\pol}{\polk}\leq \delta$, then
\begin{equation}
    \rho_C(\pol) \leq b + \xi^{\pol}_c \sqrt{2\delta}
\end{equation}
\end{proposition}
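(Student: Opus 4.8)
The plan is to obtain the bound as an immediate consequence of the surrogate-cost approximation guarantee stated just above the proposition, namely $\left|\rho_{c}(\pol) - \Tilde{\rho}_{c}(\pol)\right|\leq \xi^{\pol}_c\sqrt{2\avKL{\pol}{\polk}}$, which is itself the cost-valued restatement of Theorem \ref{thm:AvgR_policy_imp} combined with the TV-to-KL conversion \eqref{eq:tv-kl}. Since this two-sided inequality is taken as given, essentially no further machinery is needed; the proof is a short chain of substitutions.

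First I would discard the lower half of the absolute-value bound and retain only the upper inequality $\rho_c(\pol) - \Tilde{\rho}_c(\pol) \leq \xi^{\pol}_c\sqrt{2\avKL{\pol}{\polk}}$, which rearranges to
\[
\rho_c(\pol) \leq \Tilde{\rho}_c(\pol) + \xi^{\pol}_c\sqrt{2\avKL{\pol}{\polk}}.
\]
Next I would invoke the two hypotheses of the proposition. The surrogate-feasibility assumption $\Tilde{\rho}_c(\pol) < b$ bounds the first term, while the trust-region assumption $\avKL{\pol}{\polk}\leq\delta$, together with monotonicity of the square root, bounds the second via $\sqrt{2\avKL{\pol}{\polk}}\leq\sqrt{2\delta}$. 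Substituting both yields $\rho_c(\pol) < b + \xi^{\pol}_c\sqrt{2\delta}$, which is strictly stronger than the stated inequality and hence implies it.

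There is no genuine obstacle here, since all of the analytic content resides in the surrogate-cost guarantee, a cost-valued instance of Theorem \ref{thm:AvgR_policy_imp} that we assume. The only points worth a careful check are that Theorem \ref{thm:AvgR_policy_imp} transfers verbatim when the reward $r$ is replaced by the cost $c$ (its proof uses no positivity or sign property of the reward, as the accompanying footnote asserts), and that the constant $\xi^{\pol}_c = (\kappa^{\star}-1)\max_{s}\E_{a\sim\pol}|A_c^{\polk}(s,a)|$ appearing in the surrogate bound is exactly the one appearing in the conclusion. Granting these, the result follows from a single one-sided inequality and the two feasibility constraints.
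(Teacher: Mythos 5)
Your proof is correct and matches the paper's own (implicit) argument: the paper states Proposition \ref{prop:acpo_bound} as an immediate consequence of the displayed surrogate-cost bound $\left|\rho_{c}(\pol) - \Tilde{\rho}_{c}(\pol)\right|\leq \xi^{\pol}_c\sqrt{2\avKL{\pol}{\polk}}$, exactly as you do, by keeping the upper half of the inequality and substituting the two feasibility hypotheses. Your added care about the cost-valued transfer of Theorem \ref{thm:AvgR_policy_imp} and the identity of the constant $\xi^{\pol}_c$ corresponds precisely to the paper's footnote and is the only nontrivial point to verify.
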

The upper-bound in Proposition \ref{prop:acpo_bound} provides a worst-case constraint violation guarantee when $\pol$ is the solution to the average-cost variant of \eqref{eq:cpo}. It is an undiscounted parallel to Proposition 2 in \citet{achiam2017constrained} which provides a similar guarantee for the discounted case. It shows that contrary to what was previously believed \citep{tessler2018reward}, \eqref{eq:cpo}  can easily be modified to accommodate for average cost constraints and still satisfy an upper bound for worst-case constraint violation. Scalable algorithms have been proposed for approximately solving \eqref{eq:cpo}  \citep{achiam2017constrained, zhang2020first}. Proposition \ref{prop:acpo_bound} shows that these algorithms can be generalized to average cost constraints with only minor modifications. In the next section, we will show how the CPO algorithm \citep{achiam2017constrained} can be modified for average cost constraints.

\subsection{Average Cost CPO (ACPO)}\label{append:cpo}
Consider the average cost variant of \eqref{eq:cpo}:
\begin{equation}\label{eq:acpo}
\begin{aligned}
 \underset{\pol\in\Pi_{\theta}}{\text{maximize}} \quad
&\E_{\substack{s\sim\dpolk\\ a\sim\pol}}[\wb{A}^{\polk}(s,a)] \\
 \text{subject to} \quad & \Tilde{\rho}_{c}(\pol) \leq b, \\ & \avKL{\pol}{\polk} \leq \delta .
\end{aligned}
\end{equation}
Similar to TRPO/ATRPO, we can apply first and second order Taylor approximations to \eqref{eq:acpo} which then gives us
\begin{equation}\label{eq:acpo_approx}
\begin{aligned}
\underset{\theta}{\text{maximize}} \quad
&g^T(\theta-\theta_k)  \\
\text{subject to} \quad
& \Tilde{c} + \Tilde{g}^T(\theta-\theta_k)\leq 0 \\
&\frac{1}{2}(\theta-\theta_k)^T H (\theta-\theta_k)\leq\delta
\end{aligned}
\end{equation}
where $g,H$ were defined in the previous section, $\Tilde{c} = \rho(\polk) - b$, and
\begin{equation}
    \Tilde{g} := \E_{\substack{s\sim\dpolk \\ a\sim\polk}}\left[\grad\log\pol(a|s)|_{\theta=\theta_k}\wb{A}_c^{\polk}(s,a)\right]
\end{equation}
is the gradient of the constraint. Similar to the case of ATRPO, $g$, $\Tilde{g}$, $H$, and $\Tilde{c}$ can all be approximated using samples collected from $\polk$. The term $\wb{A}_c^{\polk}(s,a)$ also involves the cost bias-function (see Equation \ref{eq:bellman_objective}) which can be approximated via a separate cost critic network.
The optimization problem \eqref{eq:acpo_approx} is a convex optimization problem where strong duality holds, hence it can be solved using a simple Lagrangian argument. The update rule takes the form
\begin{equation}
    \theta = \theta_k + \frac{1}{\lambda}H^{-1}(g-\nu\Tilde{g})
\end{equation}
where $\lambda$ and $\nu$ are Lagrange multipliers satisfying \citep{achiam2017constrained}
\begin{equation}\label{eq:cpo_dual}
    \max_{\lambda,\nu\geq 0}-\frac{1}{2\lambda}\left(g^T H^{-1}g + 2\nu g^T H^{-1} \Tilde{g} + \nu^2\Tilde{g}H^{-1} \Tilde{g}^T\right) + \nu\Tilde{c} - \frac{1}{2}\lambda\delta
\end{equation}
The dual problem \eqref{eq:cpo_dual} can be solved explicitly \citep{achiam2017constrained}. Similar to ATRPO, we use the conjugate gradient method to estimate $H$ and perform a backtracking line search procedure to guarantee approximate constraint satisfaction.

\subsection{Experiment Results}

For constrained RL algorithms, being able to accurately evaluate the cost constraint for a particular policy is key to learning constraint-satisfying policies. In this section, we consider the MuJoCo agents from Section \ref{sec:experiments}. However for safety reasons, we wish the agent to maintain its average speed over a trajectory below a certain threshold which is set at 2.0 for all environments. 

Here we use the same evaluation protocol as was introduced in Section \ref{sec:eval} but we calculated the average cost (total cost / trajectory length) as well as the total return for each evaluation trajectory. We used a maximum trajectory length of 1000 for these experiments. We plotted the results of our experiments in Figure \ref{fig:cpo}. 

\begin{figure}[ht]
    \centering
    \includegraphics[width=0.8\textwidth]{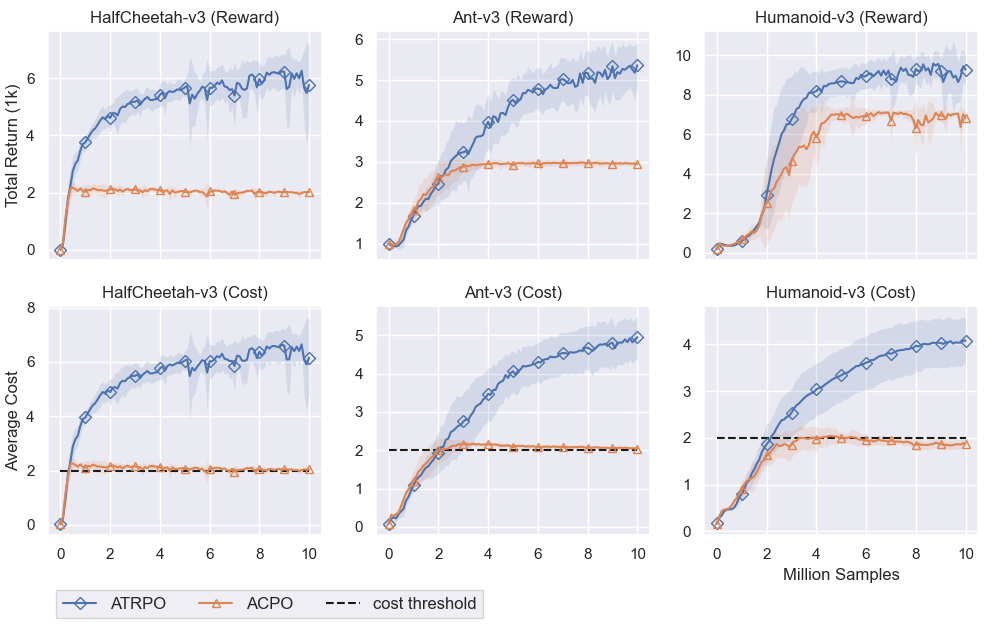}
    \caption{Performance for ACPO. Unconstrained ATRPO is plotted for comparison. The $x$-axis is the number of agent-environment interactions and the $y$-axis is the total return averaged over 10 seeds. The solid line represents the agents' average total return (top row) and average cost (bottom row) on the evaluation trajectories. The shaded region represent one standard deviation.}
    \label{fig:cpo}
\end{figure}

From Figure \ref{fig:cpo} we see that ACPO is able to learn high-performing policies while enforcing the average cost constraint.

\section{Generalized Advantage Estimator (GAE) for the Average Reward Setting}\label{append:GAE}
Suppose the agent collects a batch of data consisting of a trajectories each of length $N$ $\{s_t,a_t,r_t,s_{t+1}\}$ $(t=1,\dots,N)$ using policy $\pi$. Similar to what is commonly done for critic estimation in on-policy methods, we fit some value function $V_{\phi}^{\pi}$ parameterized by $\phi$ using data collected with the policy.

We will first review how this is done in the discounted case. Two of the most common ways of calculating the regression target for $V_{\phi}^{\pi}$ are the {\em Monte Carlo} target denoted by
\begin{equation}\label{eq:gamma_MC_target}
    V^{\text{target}}_t = \sum_{t'=t}^N \gamma^{t'-t} r_{t},
\end{equation}
or the {\em bootstrapped} target
\begin{equation}\label{eq:gamma_BS_target}
    V^{\text{target}}_t = r_{t} + \gamma\vphi(s_{t+1}).
\end{equation}

Using the dataset $\{s_{t}, V^{\text{target}}_t\}$, we can fit $V_{\phi}^{\pi}$ with supervised regression by minimizing the MSE between $V_{\phi}^{\pi}(s_{t})$ and $V^{\text{target}}_t$. With the fitted value function, we can estimate the advantage function either with the Monte Carlo estimator
\begin{equation*}
    \hat{A}^{\pi}_{\text{MC}}(s_{t}, a_{t}) = \sum_{t'=t}^N \gamma^{t'-t} r_{t} - \vphi(s_{t})
\end{equation*}
or the bootstrap estimator
\begin{equation*}
    \hat{A}^{\pi}_{\text{BS}}(s_{t}, a_{t}) = r_{t} + \gamma\vphi(s_{t+1}) - \vphi(s_{t}).
\end{equation*}
When the Monte Carlo advantage estimator is used to approximate the policy gradient, it does not introduce a bias but tends to have a high variance whereas the bootstrapped estimator introduces a bias but tends to have lower variance. These two estimators are seen as the two extreme ends of the bias-variance trade-off. In order to have better control over the bias and variance, \citet{schulman2016high} used the idea of eligibility traces \citep{sutton2018reinforcement} and introduced the Generalized Advantage Estimator (GAE). The GAE takes the form
\begin{equation}\label{eq:gamma_GAE}
    \hat{A}_{\text{GAE}}(s_{t},a_{t}) = \sum_{t'=t}^N (\gamma\lambda)^{t'-t}\delta_{t'}
\end{equation}
where
\begin{equation}
    \delta_{t'} = r_{t'} + \gamma\vphi(s_{t'+1}) - \vphi(s_{t'})
\end{equation}
and $\lambda\in[0,1]$ is the eligibility trace parameter. We can then use the parameter $\lambda$ to tune the bias-variance trade-off. It is worth noting two special cases corresponding to the bootstrap and Monte Carlo estimator:
\begin{align*}
    &\lambda = 0:\quad \hat{A}_{\text{GAE}}(s_{t},a_{t}) = r_{t} + \gamma\vphi(s_{t+1}) - \vphi(s_{t}) \\
    &\lambda = 1:\quad \hat{A}_{\text{GAE}}(s_{t},a_{t}) = \sum_{t'=t}^N \gamma^{t'-t}r_{t'} - \vphi(s_{t})
\end{align*}

For infinite horizon tasks, the discount factor $\gamma$ is used to reduce variance by downweighting rewards far into the future \citep{schulman2016high}. Also noted in \citet{schulman2016high} is that for any $l\gg 1/(1-\gamma)$, $\gamma^l$ decreases rapidly and any effects resulting from actions after $l\approx 1/(1-\gamma)$ are effectively "forgotten". This approach in essence converts a continuous control task into an episodic task where any rewards received after $l\approx 1/(1-\gamma)$ becomes negligible. This undermines the original continuing nature of the task and could prove to be especially problematic for problems where effects of actions are delayed far into the future. However, increasing $\gamma$ would lead to an increase in variance. Thus in practice $\gamma$ is often treated as a hyperparameter to balance the effective horizon of the task and the variance of the gradient estimator.

To mitigate this, we introduce how we can formulate critics for the average reward. A key difference is that in the discounted case we use $V_{\phi}^{\pi}$ to approximate the {\em discounted value function} whereas in the average reward case $\vphi$ is used to approximate the {\em average reward bias function}. 

Let
\[
\hat{\rho}_{\pi} = \frac{1}{N}\sum_{t=1}^N r_{t}
\]
denote the estimated average reward. The Monte Carlo target for the average reward value function is
\begin{equation}\label{eq:avg_MC_target}
\vtarg_t = \sum_{t'=t}^N (r_{t} - \hat{\rho}_{\pi})
\end{equation}
and the bootstrapped target is
\begin{equation}\label{eq:avg_BS_target}
\vtarg_t = r_{t} - \hat{\rho}_{\pi} +\vphi(s_{t+1}).
\end{equation}
Note that our targets (\ref{eq:avg_MC_target}-\ref{eq:avg_BS_target}) are distinctly different from the traditional discounted targets (\ref{eq:gamma_MC_target}-\ref{eq:gamma_BS_target}).

The Monte Carlo and Bootstrap estimators for the average reward advantage function are:
\begin{align*}
    \hat{A}_{\text{MC}}^{\pi}(s_{t}, a_{t}) &= \sum_{t'=t}^N (r_{t} - \hat{\rho}_{\pi}) -  \vphi(s_{t}) \\
    \hat{A}_{\text{BS}}^{\pi}(s_{t}, a_{t}) &= r_{i,t}-\hat{\rho}_{\pi} + \vphi(s_{t+1}) - \vphi(s_{t})
\end{align*}
We can similarly extend the GAE to the average reward setting:
\begin{equation}\label{eq:avg_GAE}
    \hat{A}_{\text{GAE}}(s_{t},a_{t}) = \sum_{t'=t}^N \lambda^{t'-t}\delta_{t'}
\end{equation}
where
\begin{equation}
    \delta_{t'} = r_{t'} - \hat{\rho}_{\pi}+ \vphi(s_{t'+1}) - \vphi(s_{t'}).
\end{equation}
and set the target for the value function to
\begin{equation}
    \vtarg_t = r_t - \hat{\rho}_{\pi}+ \vphi(s_{t+1}) + \sum_{t'=t+1}^N \lambda^{t'-t}\delta_{t'}
\end{equation}
The two special cases corresponding to $\lambda=0$ and $\lambda=1$ are
\begin{align*}
    &\lambda = 0:\quad \hat{A}_{\text{GAE}}(s_{t},a_{t}) = r_{t} - \hat{\rho}_{\pi} + \vphi(s_{t+1}) - \vphi(s_{t}) \\
    &\lambda = 1:\quad \hat{A}_{\text{GAE}}(s_{t},a_{t}) = \sum_{t'=t}^N (r_{t'}-\hat{\rho}_{\pi}) - \vphi(s_{t})
\end{align*}
We note again that the average reward advantage estimator is distinct from the discounted case. To summarize, in the average reward setting:
\begin{itemize}
    \item The parameterized value function is used to fit the {\em average reward bias function}.
    \item The reward term $r_{t}$ in the discounted formulation is replaced by $r_{t} - \hat{\rho}_{\pi}$.
    \item Without any discount factors, recent and future experiences are weighed equally thus respecting the continuing nature of the task.
\end{itemize}

\section{Experimental Setup}\label{append:experiment}
All experiments were implemented in Pytorch 1.3.1 and Python 3.7.4 on Intel Xeon Gold 6230 processors. We based our TRPO implementation on \url{https://github.com/ikostrikov/pytorch-trpo} and \url{https://github.com/Khrylx/PyTorch-RL}. Our CPO implementation is our own Pytorch implementation based on \url{https://github.com/jachiam/cpo} and \url{https://github.com/openai/safety-starter-agents}. Our hyperparameter selections were also based on these implementations. Our choice of hyperparameters were based on the motivation that we wanted to put discounted TRPO in the best possible light and compare its performance with ATRPO. Our hyperparameter choices for ATRPO mirrored the discounted case since we wanted to understand how performance for the average reward case differs while controlling for all other variables.

With the exception of results in Appendix \ref{append:sensitivity}, the reset cost is set to 100 on all three environments. In the original implementation of the MuJoCo environments in OpenAI gym, the maximum episode length is set to 1000\footnote{See \url{https://github.com/openai/gym/blob/master/gym/envs/__init__.py}}, we removed this restriction in our experiments in order to study long-run performance.

We used a two-layer feedforward neural network with a $\tanh$ activation for both our policy and critic networks. The policy is Gaussian with a diagonal covariance matrix. The policy networks outputs a mean vector and a vector containing the state-independent log standard deviations. States are normalized by the running mean and the running standard deviation before being fed to any network. We used the GAE for advantage estimation (see Appendix \ref{append:GAE}).  The advantage values are normalized by its batch mean and batch standard deviation before being used for policy updates. Learning rates are linearly annealed to 0 over the course of training. Note that these settings are common in most open-source implementations of TRPO and other on-policy algorithms. For training and evaluation, we used different random seeds (i.e. the random seeds we used to generate the evaluation trajectories are different from those used during training.) Table \ref{tab:hyperparameters} summarizes the hyperparameters used in our experiments.

\begin{table}[h]
    \centering
    \caption{Hyperparameter Setup}
    \vskip 0.15in
    \begin{tabular}{l l l}
    \toprule
       Hyperparameter & TRPO/ATRPO & CPO/ACPO  \\
        \midrule
    No. of hidden layers & 2 & 2  \\
      No. of hidden nodes  & 64 & 64   \\
      Activation & $\tanh$ & $\tanh$ \\
      Initial log std & -0.5 & -1 \\
      Batch size & 5000 & 5000 \\
      GAE parameter (reward) & 0.95 & 0.95  \\ 
      GAE parameter (cost) & N/A & 0.95  \\
      Learning rate for policy & $3\times 10^{-4}$ & $3\times 10^{-4}$ \\
      Learning rate for reward critic net & $3\times 10^{-4}$ & $3\times 10^{-4}$  \\
      Learning rate for cost critic net & N/A & $3\times 10^{-4}$  \\
      $L2$-regularization coeff. for critic net & $3\times 10^{-3}$ & $3\times 10^{-3}$  \\
      Damping coeff. & 0.01 & 0.01 \\
       Backtracking coeff.  & 0.8 & 0.8  \\
       Max backtracking iterations & 10 & 10 \\
       Max conjugate gradient iterations & 10 & 10 \\
      Trust region bound $\delta$ & 0.01 & 0.01  \\
       \bottomrule
    \end{tabular}
    \label{tab:hyperparameters}
\end{table}

\section{Additional Experiments}\label{append:more_exp}

\subsection{Comparing with TRPO Trained Without Resets}\label{append:trpo_mod}

Figure \ref{fig:trpo_standard} repeats the experiments presented in Figure \ref{fig:trpo} except discounted TRPO is trained in the standard MuJoCo setting without any resets (i.e. during training, when the agent falls, the trajectory terminates.) The maximum length of a TRPO training episode is 1000. This is identical to how TRPO is trained in the literature for the MuJoCo environments. We apply the same evaluation protocol introduced in Section \ref{sec:eval}. We note that when TRPO is trained in the standard MuJoCo setting, ATRPO still outperforms discounted TRPO by a significant margin.

\begin{figure}[H]
    \centering
    \includegraphics[width=0.75\textwidth]{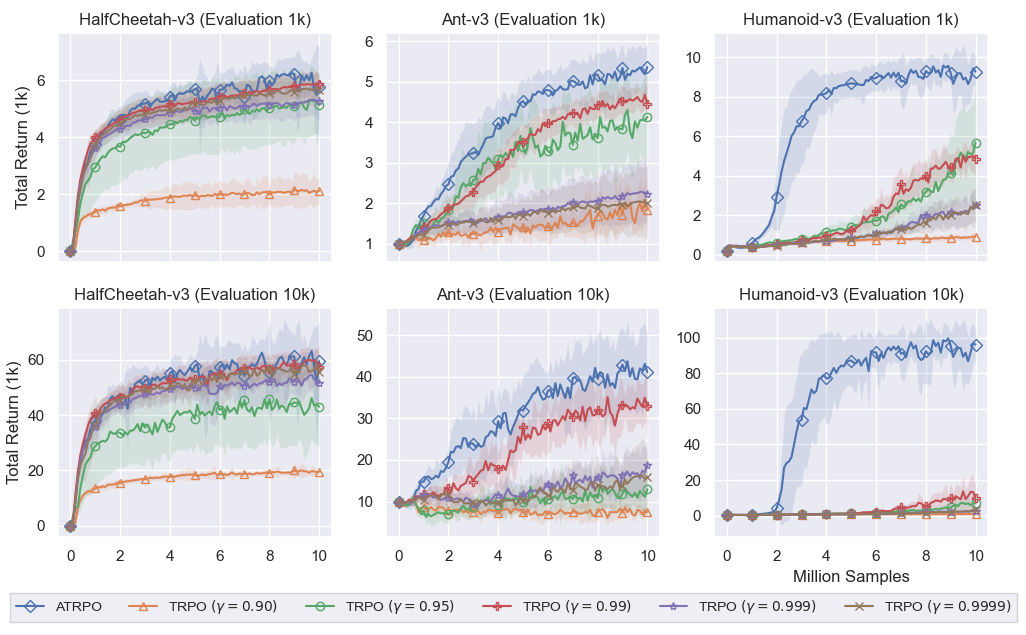}
    \caption{Comparing performance of ATRPO and TRPO with different discount factors. TRPO is trained without the reset scheme. The $x$-axis is the number of agent-environment interactions and the $y$-axis is the total return averaged over 10 seeds. The solid line represents the agents' performance on evaluation trajectories of maximum length 1,000 (top row) and 10,000 (bottom row). The shaded region represent one standard deviation.}
    \label{fig:trpo_standard}
\end{figure}

\begin{figure}[H]
    \centering
    \includegraphics[width=0.75\textwidth]{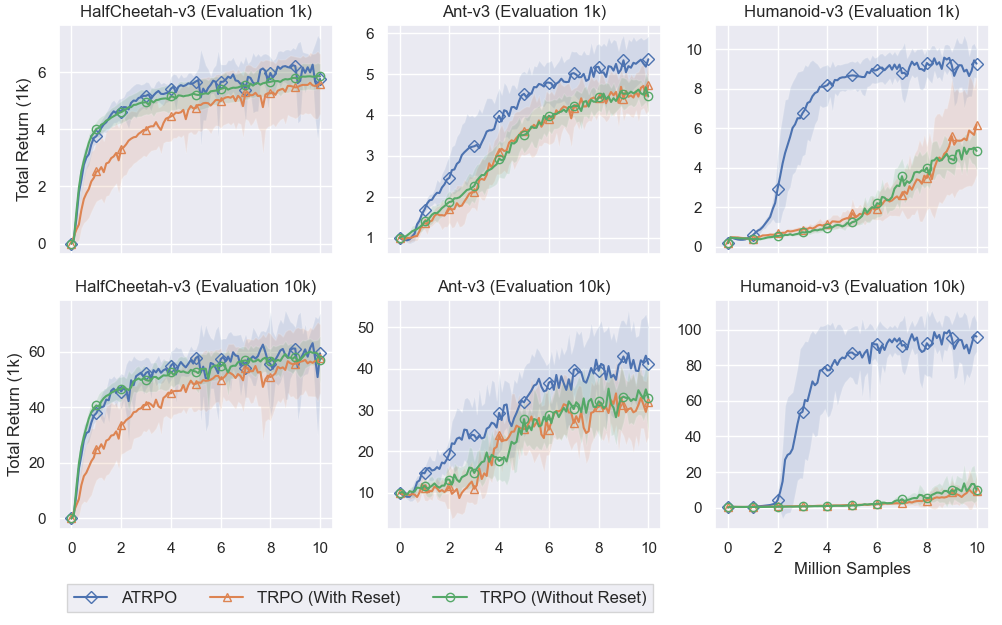}
    \caption{Comparing performance of ATRPO and TRPO trained with and without the reset costs. The curves for TRPO are for the best discount factor for each environment. The $x$-axis is the number of agent-environment interactions and the $y$-axis is the total return averaged over 10 seeds. The solid line represents the agents' performance on evaluation trajectories of maximum length 1,000 (top row) and 10,000 (bottom row). The shaded region represent one standard deviation.}
    \label{fig:trpo_reset_standard}
\end{figure}

In Figure \ref{fig:trpo_reset_standard} we plotted the performance of the best discount factor for each environment for TRPO trained with and without the reset scheme (i.e. the best performing TRPO curves from Figure \ref{fig:trpo} and Figure \ref{fig:trpo_standard}.) ATRPO is also plotted for comparison.

We note here that the performance of TRPO trained with and without the reset scheme are quite similar, this further supports the notion that introducing the reset scheme does not alter the goal of the tasks.

\subsection{Sensitivity Analysis on Reset Cost}\label{append:sensitivity}

For the experiments presented in Figure \ref{alg:atrpo}, we introduced a reset cost in order to simulate an infinite horizon setting. Here we analyze the sensitivity of the results with respect to this reset cost.

\begin{figure}[H]
    \centering
    \includegraphics[width=0.8\textwidth]{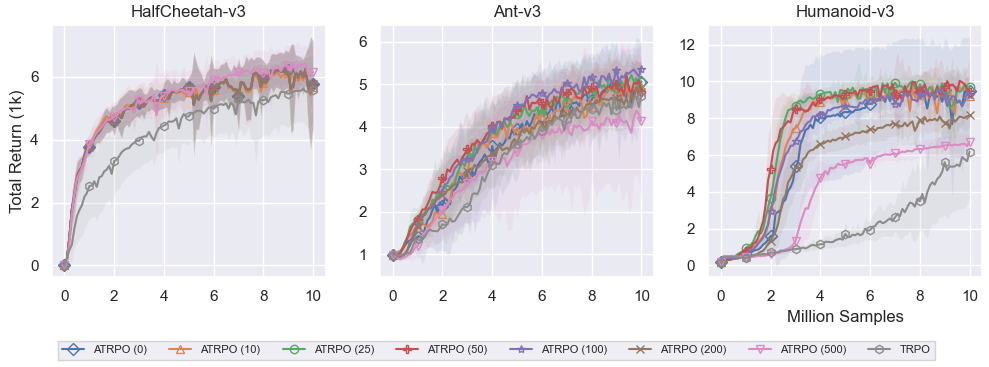}
    \caption{Comparing ATRPO trained with different reset costs to discounted TRPO with the best discount factor for each environment. The $x$-axis is the number of agent-environment interactions and the $y$-axis is the total return averaged over 10 seeds. The solid line represents the agents' performance on evaluation trajectories of maximum length 1,000. The shaded region represent one standard deviation. }
    \label{fig:trpo_sensitivity}
\end{figure}

Figure \ref{fig:trpo_sensitivity} shows that ATRPO is largely insensitive to the choice of reset cost. Though we note that for Humanoid, extremely large reset costs (200 and 500) does negatively impact performance but the result is still above that of TRPO.

\end{document}